\newif\iftextonly
\newif\ifdual 
\newtheorem{theorem}{Theorem}
\newtheorem{lemma}{Lemma}
\newcommand{\nodeweight}{\nu}
\newcommand{\edgeweight}{\omega}
\newcommand{\logweight}{\edgeweight_O}
\newcommand{\feasible}{\mathcal{X}(p_s,\edgeweight)}
\newcommand{\objective}[1]{\sum_{j=1}^Vd_j \mathbb{E}\left[x_j(#1)\right]}
\newcommand{\objectiveGrad}[2]{\Delta J(#1\mid#2)}
\newcommand{\Xk}[1]{\{\rho_k\}_{k=1}^{#1}}
\newcommand{\approxfact}{1-e^{-p_s/\lambda}}
\newcommand{\problemname}{ TSO }
\renewcommand{\baselinestretch}{.955}
\title{\LARGE \bf
The Team Surviving Orienteers Problem: Routing Robots\\ in Uncertain Environments with Survival Constraints
}
\author{Stefan Jorgensen,  Robert H. Chen, Mark B. Milam, and Marco Pavone %
\thanks{Stefan Jorgensen is with the Department of Electrical Engineering, Stanford University, Stanford, California 94305. His work is supported by NSF grant DGE-114747 {\tt\footnotesize stefantj@stanford.edu}

Robert H. Chen and Mark B. Milam are with NG Next, Northrop Grumman Aerospace Systems, Redondo Beach, California 90278 {\tt \footnotesize \{robert.chen,mark.milam\}@ngc.com}

Marco Pavone is with the Department of Aeronautics \& Astronautics, Stanford University, Stanford, California 94035  {\tt\footnotesize pavone@stanford.edu}}

}         
       \newcommand{\mpmargin}[2]{{\color{cyan}#1}\marginpar{\color{cyan}\raggedright\footnotesize [MP]:
#2}}
\begin{document}
\maketitle
\thispagestyle{empty}
\pagestyle{empty} 

\begin{abstract}
In this paper we study the following multi-robot coordination problem: given a graph, where each edge is weighted by the probability of surviving while traversing it, find a set of paths for $K$ robots that maximizes the expected number of nodes collectively visited, subject to constraints on the probability that each robot survives to its destination. We call this problem the Team Surviving Orienteers (TSO) problem. The TSO problem is motivated by scenarios where a team of robots must traverse a dangerous, uncertain environment, such as aid delivery in disaster or war zones. We present the TSO problem formally along with several variants, which represent ``survivability-aware" counterparts for  a wide range of multi-robot coordination problems such as vehicle routing, patrolling, and informative path planning. We propose an approximate greedy approach for selecting paths, and prove that the value of its output is bounded within a factor $\approxfact$ of the optimum where $p_s$ is the per-robot survival probability threshold, and $1/\lambda\leq 1$ is the approximation factor of an oracle routine for the well-known orienteering problem. Our approach has linear time complexity in the team size and polynomial complexity in the graph size. Using numerical simulations, we verify that our approach is close to the optimum in practice and that it scales to problems with hundreds of nodes and tens of robots.
\end{abstract}

\section{Introduction}
Consider the problem of delivering humanitarian aid in a war zone with a team of robots. There are a number of sites which need the resources, but traveling among these sites is dangerous. While the aid agency wants to deliver aid to every city, it also seeks to limit the number of assets that are lost. We formalize this problem as a generalization of the orienteering problem \cite{BLG-LL-RV:87}, whereby one seeks to visit as many nodes in a graph as possible given a budget constraint and travel costs. In the aid delivery case, the travel costs are the probability that a robotic aid vehicle is lost while traveling between sites, and the goal is to maximize the {\em expected} number of sites visited by the vehicles, while keeping the return {\em probability} for each vehicle above a specified survival threshold (i.e., while fulfilling a chance constraint for the survival of each vehicle). We refer to such problem formulation as the ``team surviving orienteers" (TSO) problem, illustrated in Figure \ref{fig:aid_illustration}. The TSO problem is distinct from previous work because of its notion of \emph{risky traversal:} when a robot traverses an edge, there is a probability that it is lost and does not visit any other nodes. This creates a complex, history-dependent coupling between the edges chosen and the distribution of nodes visited, which precludes the application of existing approaches available for the traditional orienteering problem. 

The objective of this paper is to devise a constant-factor approximation algorithm for the TSO problem. Our key technical insight is that the expected number of nodes visited satisfies a diminishing returns property known as submodularity, which for set functions means that $f(A \cup B) \le f(A) + f(B)$. We develop a {\em linearization procedure} for the problem, which leads to a greedy algorithm that enjoys a constant-factor approximation guarantee. We emphasize that while a number of works have considered orienteering problems with submodular objectives \cite{CC-MP:05,AMC-MG-BWT:11,HZ-YV:16} or chance constraints \cite{AG-RK-VN-RR:12,PV-AK:13} separately, the combination of the two makes the TSO a novel problem, as detailed next.

\begin{figure}[t]
    \centering
    \includegraphics[width=.4\textwidth]{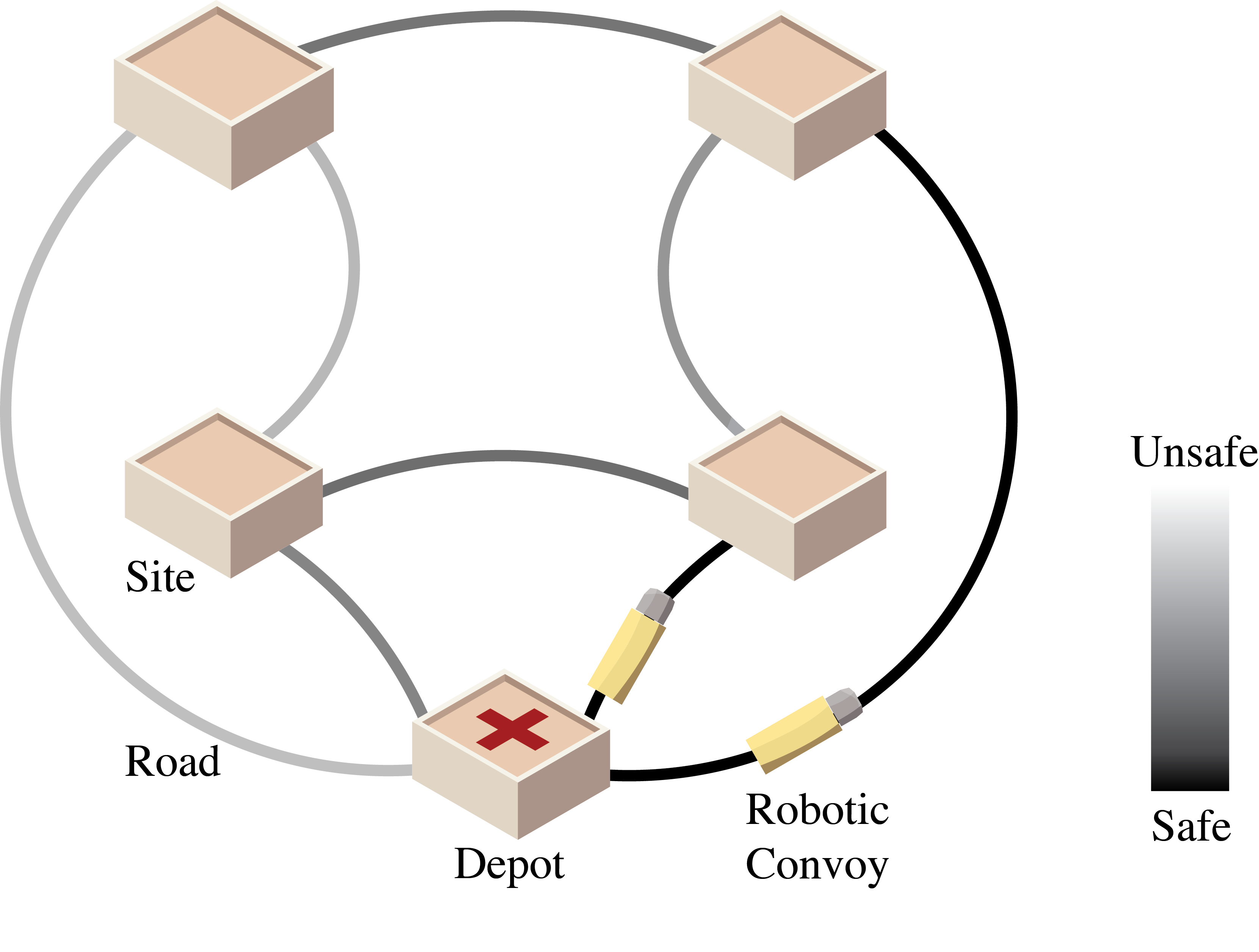}
    \caption{Illustration of the TSO problem applied to an aid delivery scenario. The objective is to maximize the expected number of sites visited by at least one robotic convoy. Travel between sites is risky (as emphasized by the gray color scale for each edge), and paths must be planned to ensure that the return probability for each vehicle is above a survival threshold. }
    \label{fig:aid_illustration}
\end{figure}

\emph{Related work.} The orienteering problem (OP) has been extensively studied \cite{PV-WS-DVO:11,AG-HCL-PV:16} and is known to be NP-hard. Over the past decade a number of constant-factor approximation algorithms have been developed for special cases of the problem \cite{CC-NK-MP:12}.  Below we highlight several variants which share either similar objectives or constraints as the TSO problem. 

The \emph{submodular orienteering problem} considers finding a single path which maximizes a submodular reward function of the nodes visited. The recursive greedy algorithm proposed in \cite{CC-MP:05} yields a solution in quasi-polynomial time with reward lower bounded as $\Omega(\text{OPT}/\log(\text{OPT}))$, where $\text{OPT}$ is the optimum value. More recently, \cite{HZ-YV:16} develops a (polynomial time) generalized cost-benefit algorithm, useful when searching the feasible set is NP-hard (such as longest path problems). The authors show that the output of their algorithm is $\Omega(\frac{1}{2}(1-1/e)\text{OPT}^*)$, where $\text{OPT}^*$ is the optimum for a relaxed problem. In our context, $\text{OPT}^*$ roughly corresponds to the maximum expected number of nodes visited with survival probability constraint $\sqrt{p_s}$, which may be significantly different from the actual optimum. Our work considers a specific submodular function, however we incorporate risky traversal, give a stronger (problem independent) guarantees, and discuss an extension to general submodular functions. 
In the \emph{orienteering problem with stochastic travel times} proposed by \cite{AMC-MG-BWT:11}, travel times are stochastic and reward is accumulated at a node only if it is visited before a deadline. This setting could be used to solve the single robot special case of the TSO by using a log transformation on the survival probabilities, but \cite{AMC-MG-BWT:11} does not provide any polynomial time guarantees. 
In the \emph{risk-sensitive orienteering problem} \cite{PV-AK:13}, the goal is to maximize the sum of rewards (which is history independent) subject to a constraint on the probability that that the path cost is large. The TSO unifies the models of the risk-sensitive and stochastic travel time variants by considering both a submodular objective (expected number of nodes visited) and a chance constraint on the total cost. Furthermore, we provide a constant-factor guarantee for the \emph{team} version of this problem.

A second closely-related area of research is represented by the vehicle routing problem (VRP) \cite{VP-MG-CG-ALM:13,HNP-MW-CAK:15}, which is a family of problems focused on finding a set of paths that maximize quality of service subject to budget or time constraints. The {\em probabilistic VRP} (PVRP) considers stochastic edge costs with chance constraints on the path costs -- similar to the risk-averse orienteering and the TSO problem constraints. The authors of \cite{GL-FL-HM:89} pose the simultaneous location-routing problem, where both routes and depot locations are selected to minimize path costs subject to  a probabilistic connectivity constraint, which specifies the {average case risk} rather than individual risks. More general settings were considered in \cite{BLG-JRY:92}, which considers several distribution families (such as the exponential and normal distributions), and \cite{WRS-BLG:83}, which considers nonlinear risk constraints. In contrast to the TSO problem, the PVRP requires {\em every} node to be visited and seeks to minimize the travel cost. In the TSO problem, we require every path to be safe and maximize the expected number of nodes visited.

A third related branch of literature is the informative path planning problem (IPP), which seeks to find a set of $K$ paths for mobile robotic sensors  in order to maximize the information gained about an environment. One of the earliest IPP approaches \cite{AS-AK-CG-WJK:09} extends the recursive greedy algorithm of \cite{CC-MP:05} using a spatial decomposition to generate paths for multiple robots. They use submodularity of information gain to provide performance guarantees. Sampling-based approaches to IPP were proposed by \cite{GAH-GSS:13}, which come with asymptotic guarantees on optimality. The structure of the IPP is most similar to that of the TSO problem (since it is a multi-agent path planning problem with a submodular objective function which is nonlinear and history dependent), but it does not capture the notion of risky traversal which is essential to the TSO. Our general approach is inspired by works such as \cite{NA-JLN-KD-GJP:15}, but for the TSO problem we are able to further exploit the problem structure to derive constant-factor guarantees for our polynomial time algorithm.

\emph{Statement of Contributions. }
The contribution of this paper is fourfold. First, we propose a generalization of the orienteering problem, referred to as the TSO problem. By considering a multi-robot (team) setting, we extend the state of the art for the submodular orienteering problem, and by maximizing the expected number of nodes visited at least once, we extend the state of the art in the probabilistic vehicle routing literature. From a practical standpoint, as discussed in Section \ref{sec:problem_statement}, the TSO problem represents a ``survivability-aware" counterpart for  a wide range of multi-robot coordination problems such as vehicle routing, patrolling, and informative path planning. Second, we establish that the objective function of the TSO problem is submodular, provide a linear relaxation of the single robot TSO problem (which can be solved as a standard orienteering problem), and show that the solution to the relaxed problem provides a close approximation of the optimal solution of the single robot TSO problem. Third, we propose an approximate greedy algorithm which has polynomial complexity in the number of nodes and linear complexity in the team size, and prove that the value of the output of our algorithm is $\Omega( (\approxfact) \text{OPT})$, where $\text{OPT}$ is the optimum value, $p_s$ is the per-robot survival probability constraint, and $1/\lambda\leq 1$ is the approximation factor of an oracle routine for the solution to the orienteering problem (we note that, in practice, $p_s$ is usually close to unity). Finally, we demonstrate the effectiveness of our algorithm for large problems using simulations by solving a problem with 900 nodes and 25 robots.

\emph{Organization.} In Section \ref{sec:background} we review key background information. In Section \ref{sec:problem_statement} we state the problem formally, give an example, and describe several variants and applications of the TSO. In Section \ref{sec:solution} we show that the objective function is submodular and describe the linear relaxation technique. We then demonstrate how to solve the relaxed problem as an orienteering problem, outline a greedy solution approach for the TSO problem, give approximation guarantees, and characterize the algorithm's complexity. We finally give extensions of the algorithm for variants of the TSO. In Section \ref{sec:numbers} we verify the performance bounds and demonstrate the scalability of our approach. Finally, we outline future work and draw conclusions in Section \ref{sec:conclusion}.

\section{Background}\label{sec:background}
In this section we review key material for our work and extend a well-known theorem in the combinatorial optimization literature to our setting.

\subsection{Submodularity} Submodularity is the property of `diminishing returns' for set functions. The following definitions are summarized from \cite{AK-DG:12}. Given a set $\mathcal{X}$, its possible subsets are represented by $2^\mathcal{X}$. For two sets $X$ and $X'$, the set $X' \setminus X$ contains all elements in $X'$ but not $X$.  A set function $f: 2^\mathcal{X} \to \mathbb{R}$ is said to be \emph{normalized} if $f(\emptyset) = 0$ and to be \emph{monotone} if for every $X \subseteq X'\subseteq \mathcal{X}$, $f(X) \le f(X')$. A set function $f: 2^\mathcal{X}\to \mathbb{R}$ is \emph{submodular} if for every $X \subseteq X' \subset \mathcal{X}$, $x \in \mathcal{X} \setminus X'$, we have 
\begin{equation*}
 f(X \cup \{x\}) - f(X) \ge f(X' \cup \{x\}) - f(X').
\end{equation*}
 The quantity on the left hand side is the \emph{discrete derivative} of $f$ at $X$ with respect to $x$, which we write as $\Delta f(x\mid X)$.

\subsection{The Approximate Greedy Algorithm}\label{sec:BG:approx}
A typical submodular maximization problem entails finding a set $X \subseteq \mathcal{X}$ with cardinality $K$ that maximizes $f$. Finding an optimal solution, $X^*$, is NP-hard for general submodular functions \cite{AK-DG:12}. The \emph{greedy algorithm} constructs a set $\bar{X}_K=\{x_1,\dots,x_K\}$ by iteratively adding an element $x$ which maximizes the discrete derivative of $f$ at the partial set already selected. In other words the $q$th element satisfies:
$$x_{q} \in \underset{x\in\mathcal{X}\setminus\bar{X}_{q-1}}{\text{argmax}}\ \Delta f(x \mid \bar{X}_{q-1}). $$
We refer to the optimization problem above as `the greedy sub-problem' at iteration $q$. A well-known theorem from \cite{GLN-LAW-MLF:78} states that if $f$ is a monotone, normalized, non-negative, and submodular function, then $f(\bar{X}_K)\ge (1-\frac{1}{e})f(X^*)$. This is a powerful result, but if the set $\mathcal{X}$ is large we might only be able to approximately maximize the discrete derivative. An $\alpha$-approximate greedy algorithm constructs the set $\hat{X}_K$ by iteratively adding elements which {\em approximately} maximize the discrete derivative. In particular for some fixed $\alpha \le 1$, the $q$th element $\hat{x}_q$ satisfies:
$$ \Delta f(\hat{x}_q \mid \hat{X}_{q-1}) \ge \alpha \Delta f(x\mid \hat{X}_{q-1}) \qquad \forall x \in \mathcal{X}\setminus \hat{X}_{q-1}.$$
In the following theorem, we extend Theorem 4.2 of \cite{GLN-LAW-MLF:78} for the $\alpha$-approximate greedy algorithm:
\begin{theorem}[$\alpha$-approximate greedy guarantee]\label{thm:extension}
Let $f$ be a monotone, normalized, non-negative, and submodular function with discrete derivative $\Delta f$. Then for the output of any $\alpha$-approximate greedy algorithm with $L$ elements, $\hat{X}_L$, we have the following inequality:
$$f(\hat{X}_L) \ge \left( 1 - e^{-\alpha L/K}\right) \max_{X \in 2^{\mathcal{X}} : \lvert X\rvert = K} f(X).$$ 
\end{theorem}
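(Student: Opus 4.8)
The plan is to adapt the classical argument of \cite{GLN-LAW-MLF:78} by tracking the \emph{optimality gap} $\delta_q := f(X^*) - f(\hat{X}_q)$, where $X^*$ denotes a maximizer of $f$ over all subsets of cardinality $K$, and deriving a one-step contraction of the form $\delta_{q+1} \le (1 - \alpha/K)\,\delta_q$. Unrolling this recursion for $L$ steps and applying the elementary bound $1 - x \le e^{-x}$ then produces the claimed factor $1 - e^{-\alpha L/K}$. I note that the argument naturally accommodates $L \ne K$, which is precisely what lets us compare an $L$-element approximate-greedy output against the best $K$-element set.

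To establish the contraction, I would first use monotonicity to write $f(X^*) \le f(X^* \cup \hat{X}_q)$, and then telescope the right-hand side over the elements of $X^* \setminus \hat{X}_q$ as a sum of discrete derivatives $\Delta f(\cdot \mid \cdot)$ evaluated along a chain that interpolates between $\hat{X}_q$ and $\hat{X}_q \cup X^*$. Submodularity lets me upper-bound each such derivative by $\Delta f(x \mid \hat{X}_q)$, i.e.\ the derivative taken at the current greedy set, yielding $f(X^*) - f(\hat{X}_q) \le \sum_{x \in X^* \setminus \hat{X}_q} \Delta f(x \mid \hat{X}_q)$. The defining property of the $\alpha$-approximate greedy step then bounds each summand by $\tfrac{1}{\alpha}\,\Delta f(\hat{x}_{q+1} \mid \hat{X}_q) = \tfrac{1}{\alpha}\bigl(f(\hat{X}_{q+1}) - f(\hat{X}_q)\bigr)$, and since there are at most $K$ terms we arrive at $\delta_q \le \tfrac{K}{\alpha}\bigl(\delta_q - \delta_{q+1}\bigr)$, which rearranges to the desired contraction.

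Finally, with $\delta_0 = f(X^*)$ by normalization, the recursion gives $\delta_L \le (1 - \alpha/K)^L f(X^*) \le e^{-\alpha L/K} f(X^*)$, and substituting $\delta_L = f(X^*) - f(\hat{X}_L)$ yields the theorem. I expect the main obstacle to be bookkeeping rather than conceptual: one must carry the factor $1/\alpha$ cleanly through the telescoping sum so that $\alpha$ enters once per iteration (not once globally), and must justify padding the sum from $\lvert X^* \setminus \hat{X}_q \rvert$ terms up to $K$ terms. The latter is where non-negativity of the discrete derivatives, itself a consequence of monotonicity, is essential: since each added summand is non-negative, inflating the count to $K$ only loosens the inequality in the safe direction.
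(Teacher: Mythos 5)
Your proposal is correct and follows essentially the same route as the paper's proof: monotonicity plus a telescoping expansion of $f(X^* \cup \hat{X}_q)$, submodularity to flatten the conditioning sets down to $\hat{X}_q$, the $\alpha$-approximate greedy property to bound each summand by $\tfrac{1}{\alpha}\Delta f(\hat{x}_{q+1}\mid \hat{X}_q)$, and the resulting contraction $\delta_{q+1} \le (1-\alpha/K)\delta_q$ unrolled with $1-x\le e^{-x}$. The only cosmetic difference is that you telescope over $X^*\setminus\hat{X}_q$ and pad to $K$ terms (correctly justified via non-negativity of the discrete derivatives), whereas the paper sums over all $K$ elements of $X^*$ directly.
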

\begin{proof}
The case where $L=K$ is a special case of Theorem 1 from \cite{KW-RI-JB:14}. To generalize to $L \ge K$ we extend the proof for the greedy algorithm in \cite{AK-DG:12}. Let $X^* \in 2^\mathcal{X}$ be the set which maximizes $f(X)$ subject to the cardinality constraint $\lvert X \rvert = K$. For $\ell < L$, we have:
\begin{align*}
f(X^*) &\le f(X^* \cup \hat{X}_{\ell})\\
&= f(\hat{X}_{\ell}) + \sum_{k=1}^K \Delta f(x_k^* \mid \hat{X}_{\ell} \cup \{x_1^*,\dots,x_{k-1}^*\})\\
&\le f(\hat{X}_{\ell}) + \sum_{k=1}^K \Delta f(x_k^* \mid \hat{X}_\ell)\\
&\le f(\hat{X}_{\ell}) + \frac{1}{\alpha} \sum_{k=1}^K \Delta f(\hat{x}_{\ell+1}\mid \hat{X}_\ell)\\
&\le f(\hat{X}_{\ell}) + \frac{K}{\alpha} (f(\hat{X}_{\ell+1}) - f(\hat{X}_{\ell})).
\end{align*}
The first line follows from the monotonicity of $f$, the second is a telescoping sum, and the third follows from the submodularity of $f$. The fourth line is due to the $\alpha$-approximate greedy construction of $\hat{X}_L$, and the last is because $\lvert X^*\rvert \le K$. Now define $\delta_\ell = f(X^*) - f(\hat{X}_\ell)$. We can re-arrange the inequality above to yield:
\begin{equation*}
\delta_{\ell+1} \le \left(1 - \frac{\alpha}{K}\right) \delta_{\ell} \le \left(1 - \frac{\alpha}{K}\right)^{\ell+1}\delta_0.
\end{equation*}
Since $f$ is non-negative, $\delta_0 \le f(X^*)$ and using the inequality $1-x \le e^{-x}$ we get
\begin{equation*}
\delta_L \le \left(1-\frac{\alpha}{K} \right)^L \delta_0 \le \left(e^{-\alpha L/K} \right)f(X^*).
\end{equation*}
Now substituting $\delta_L = f(X^*) - f(\hat{X}_L)$ and rearranging:
\begin{equation*}
f(\hat{X}_L) \ge \left( 1 - e^{-\alpha L/K}\right) f(X^*).
\end{equation*}
\end{proof}


\subsection{Graphs}
Let $\mathcal{G}(\mathcal{V},\mathcal{E})$ denote a graph, where $\mathcal{V}$ is the node set and $\mathcal{E}\subset \mathcal{V} \times \mathcal{V}$ is the edge set. Explicitly, an edge is an ordered pair of nodes $(i,j)$, and represents the ability to travel from the \emph{source node} $i$ to the \emph{sink node} $j$. A graph is called \emph{simple} if there is only one edge which connects any given pair of nodes.  A path is an ordered sequence of unique nodes such that there is an edge between adjacent nodes. For $n\ge 0$, we denote the $n$th node in path $\rho$ by $\rho(n)$ and denote the number of edges in a path by $\lvert \rho \rvert$.

\section{Problem statement}\label{sec:problem_statement}
In this section we give the formal problem statement for the TSO, work out an example problem, and describe applications and variants of the problem.
\subsection{Formal Problem Description}\label{sec:prob_description}
Let $\mathcal{G}$ be a simple graph with $\lvert\mathcal{V}\rvert = V$ nodes. Edge weights $\edgeweight:\mathcal{E} \to (0,1]$ correspond to the probability of survival for traversing an edge. 
At step $n$ a robot following path $\rho$ traverses edge $e_\rho^n=(\rho(n-1),\, \rho(n))$. Define the independent Bernoulli random variables $s_n(\rho)$ which are $1$ with probability $\edgeweight(e_\rho^n)$ and $0$ with probability $1-\edgeweight(e_\rho^n)$. If a robot follows path $\rho$, the random variables $a_n(\rho) = \prod_{i=1}^n s_i(\rho)$ can be interpreted as being 1 if the robot `survived' all of the edges taken until step $n$ and 0 if the robot `fails' on or before step $n$.

Given a start node $v_s$, a terminal node $v_t$, and survival probability $p_s$ we must find $K \geq 1$ paths $\{\rho_k\}_{k=1}^K$ (one for each of $K$ robots) such that, for all $k$, the probability that $a_{\lvert\rho_k\rvert}(\rho_k) = 1$ is at least $p_s$, $\rho_k(0) = v_s$ and $\rho_k(\lvert\rho_k\rvert) = v_t$. The set of paths which satisfy these constraints is written as $\feasible$. Using Dijkstra's algorithm one can readily test whether $\feasible$ is empty as follows: For each node $j$, set edge weights as $-\log(\edgeweight(e))$, compute the shortest path from $v_s$ to $j$, then delete the edges in that path and compute the shortest path from $j$ to $v_t$. If the sum of edge weights along both paths is less than $-\log(p_s)$ then the node is reachable, otherwise it is not. This approach can prove whether $\feasible$ is empty after $O(V^2\log(V))$ computations. From here on we assume that $\feasible$ is non-empty.

\begin{figure}[t]
    \centering
    \includegraphics{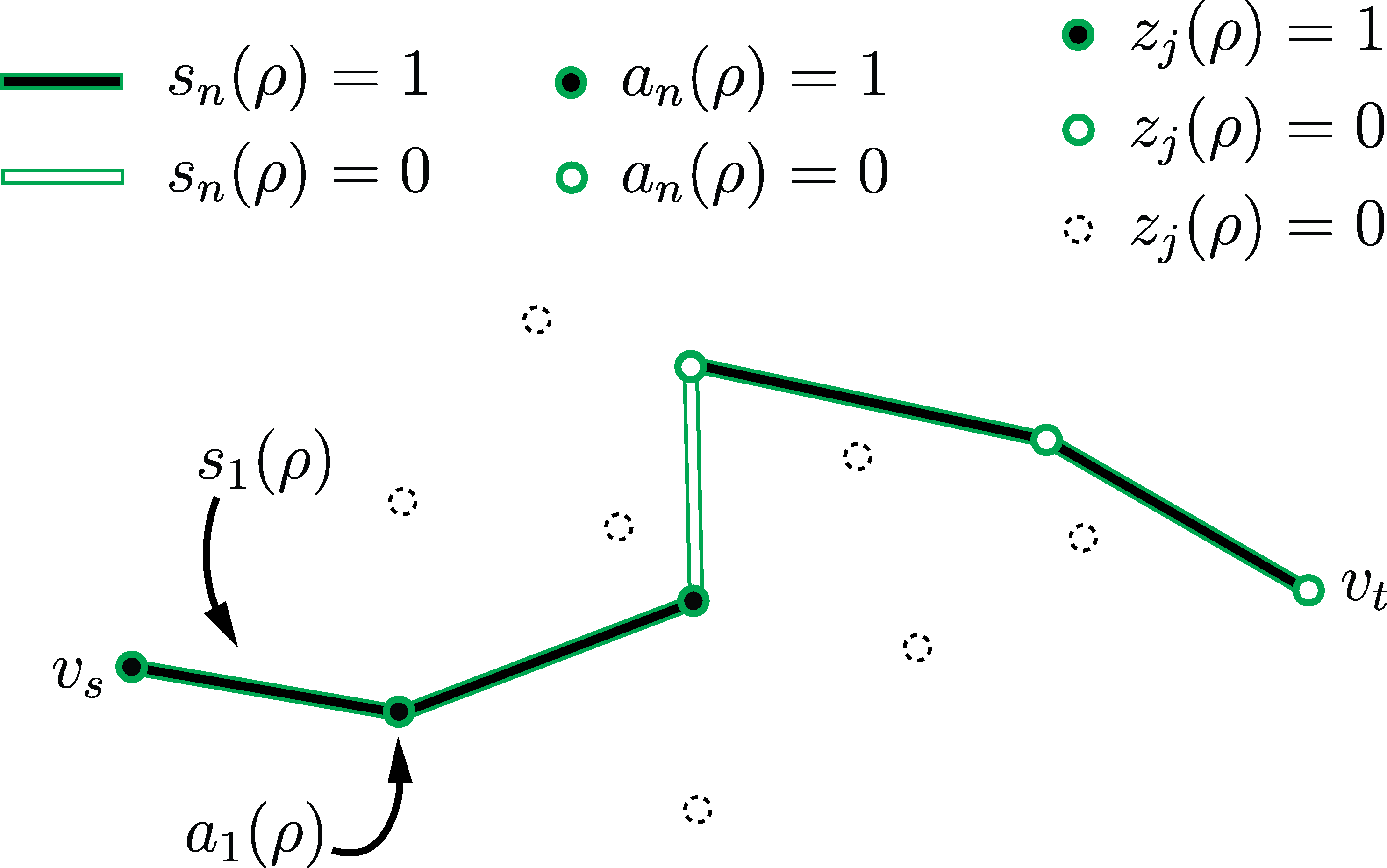}
    \caption{Illustration of the notation used. The robot plans to take path $\rho$, whose edges are represented by lines. The fill of the lines represent the value of $s_n(\rho)$. In this example $s_3(\rho) = 0$, which means that $a_3(\rho) = a_4(\rho) = a_5(\rho) = 0$. The variables $z_j(\rho)$ are zero if either the robot fails before reaching node $j$ or if node $j$ is not on the planned path.}
    \label{fig:notation}
\end{figure}

Define the indicator function $\mathbb{I}\{x\}$, which is 1 if $x$ is true (or nonzero) and zero otherwise. Define the Bernoulli random variables for $j=1,\dots,V$:
\begin{equation*}
z_j(\rho) := \max_{n=1,\dots,\lvert \rho \rvert} \, a_n(\rho)\,\mathbb{I}\{\rho(n)=j\},
\end{equation*}
which are 1 if a robot following path $\rho$ visits node $j$ and 0 otherwise. Because $z_j(\rho)$ is independent of $z_j(\rho')$ for $\rho \neq \rho'$, the event that node $j$ is visited by at least one robot is:
	\begin{equation*}
x_j\left(\{\rho_k\}_{k=1}^K\right) := 1-\prod_{k=1}^K (1-z_j(\rho_k)),
\end{equation*}
and the total number of nodes visited is the sum of these variables over $j$. Let $d_j > 0$ be the priority of visiting node $j$. Then the TSO problem is formally stated as:


\begin{quote}{\bf Team Surviving Orienteers Problem:}
Given a graph $\mathcal{G}$, edge weights $\edgeweight$, survival probability constraint $p_s$ and team size $K$, maximize the weighted expected number of nodes visited by at least one robot:
\begin{equation*}
\begin{aligned}
& \underset{\rho_1,\dots,\rho_K}{\text{maximize}} & & \sum_{j=1}^Vd_j\mathbb{E}\left[x_j\left(\{\rho_k\}_{k=1}^K\right)\right]  & &\\
& \text{subject to} & &  \mathbb{P}\{a_{\lvert\rho_k\rvert}(\rho_k) = 1\} \ge p_s & k=1,\dots,K\\
& & & \rho_k(0) = v_s & k=1,\dots,K\\
& & & \rho_k(\lvert\rho_k\rvert) = v_t & k=1,\dots,K
\end{aligned}
\end{equation*}
\end{quote}

The objective is the weighted expected number of nodes visited by the $K$ robots. The first set of constraints enforces the survival probability, the second and third sets of constraints enforce the initial and final node constraints. 

\subsection{Example}

An example of the \problemname problem is given in Figure \ref{fig:examplegraph}. There are five nodes, and edge weights are shown next to their respective edges. Two robots start at node 1, and must end at node 1 with probability at least $p_s=0.75$.  Path $\rho_1 = \{1,3,5,2,1\}$ is shown in Figure \ref{fig:exampleone}, and path $\rho_2 = \{1,4,5,2,1\}$ is shown alongside $\rho_1$ in Figure \ref{fig:exampletwo}. Robot 1 visits node 3 with probability 1.0 and node 5 with probability 0.96. Robot 2 also visits node 5 with probability 0.96 and so the probability at least one robot visits node 5 is $\mathbb{E}\left[x_5(\{\rho_1,\rho_2\})\right]= 0.9984$. The probability that robot 1 returns safely is $\mathbb{E}\left[a_{\lvert\rho_1\rvert}(\rho_1)\right] = 0.794$. For this simple problem, $\rho_1$ and $\rho_2$ are two of three possible paths  (the third is $\{1,3,5,4,1\}$).  The expected number of nodes visited by the first robot following $\rho_1$ is 3.88, and for two robots following $\rho_1$ and $\rho_2$ it is 4.905. Since there are only five nodes, it is clear that adding more robots must yield diminishing returns.

\begin{figure}[t] \label{fig:example}
	\centering
	\subfigure[Graph $\mathcal{G}$]{\label{fig:examplegraph}
		\includegraphics{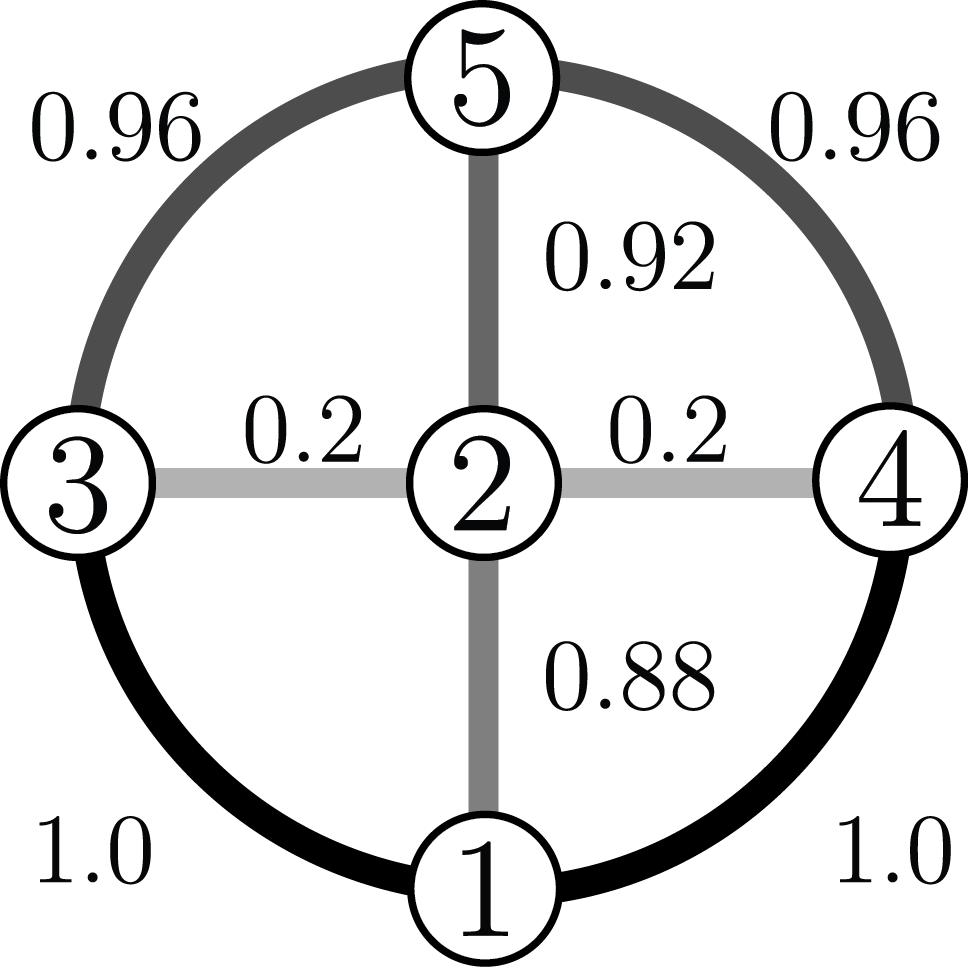}
        \hspace{.35em}
	}%
	\subfigure[Path for one robot]{\label{fig:exampleone}
    \centering
    \hspace{.6em}
		\includegraphics{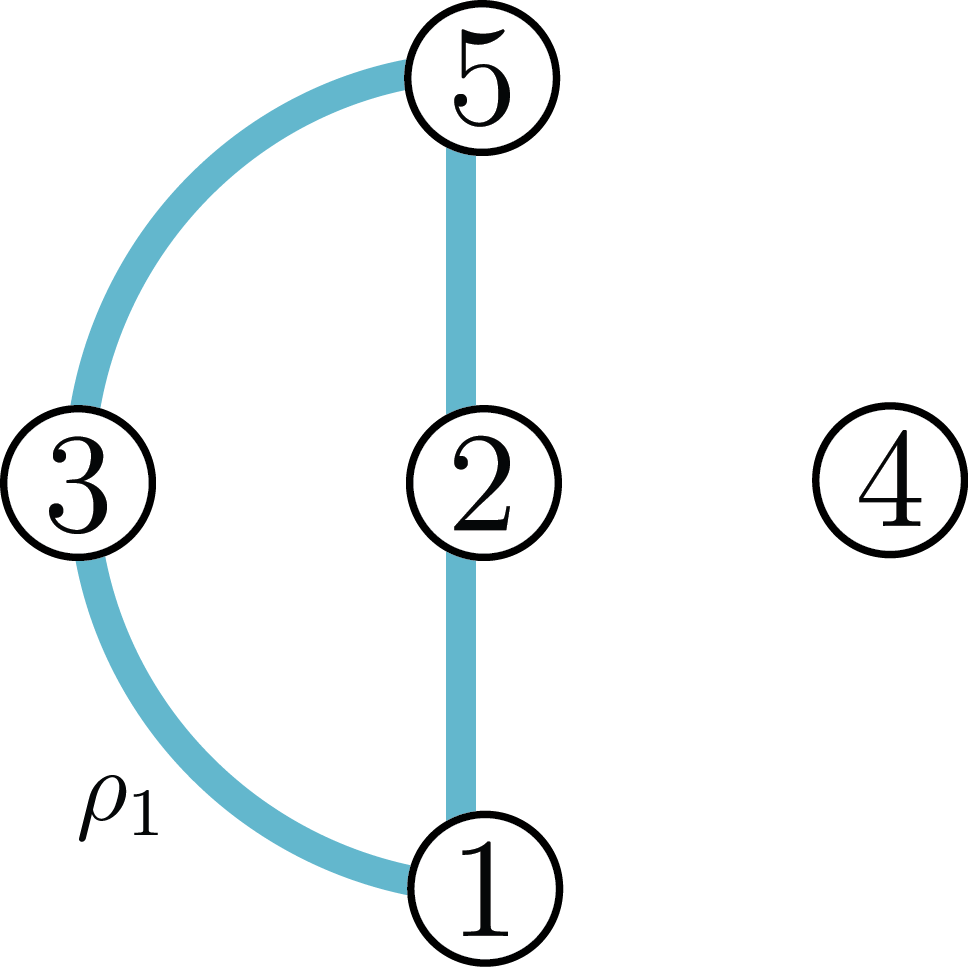}\hspace{.6em}
    }%
	\subfigure[Paths for two robots]{\label{fig:exampletwo}
    \centering
    \hspace{.35em}
		\includegraphics{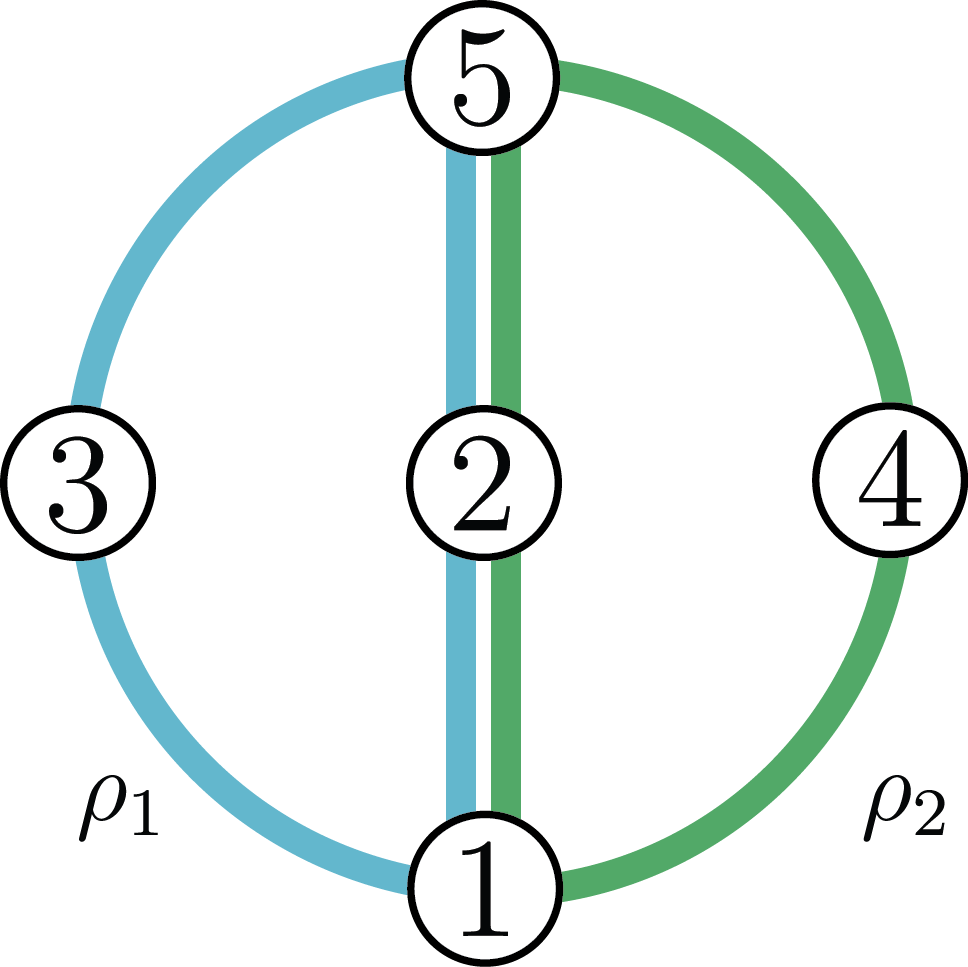}
        \hspace{.3em}
    }%
\caption{(a) Example of a TSO problem. Robots start at the bottom and darker lines correspond to safer paths. (b) A single robot can only visit four nodes safely. (c) Two robots can visit all nodes safely. It is easy to see that adding more robots yields diminishing returns.}
\end{figure}
\subsection{Variants and Applications} \label{sec:prob_variants}
\emph{Edge rewards and patrolling:} Our formulation can easily be extended to a scenario where the goal is to maximize the expected number of \emph{edges} visited by at least one robot. Define $z_{i,j}(\rho)$ to indicate whether a robot following path $\rho$ takes edge $(i,j)$, and for $(i,j)\in \mathcal{E}$ define $x_{i,j}(\{\rho_k\}_{k=1}^K)$ as before with $z_j$ replaced by $z_{i,j}$ (if $(i,j)\notin \mathcal{E}$, then define $x_{i,j}(\cdot) = 0$). The objective function for this problem is now:
$$\sum_{i=1}^V \sum_{j=1}^V d_{i,j}\mathbb{E}[x_{i,j}(\{\rho_{k}\}_{k=1}^K)].$$
This variant could be used to model a patrolling problem, where the goal is to inspect the maximum number of roads subject to the survival probability constraints. Such problems also occur when planning scientific missions (e.g., on Mars), where the objective is to execute the most important traversals.

\emph{Multiple visits and IPP:} We consider rewards for multiple visits as follows. Let $x^{(m)}_j$ indicate the event that node $j$ is visited by at least $m\ge 1$ robots, and let $d_j^{(m)}$ be the marginal benefit of the $m$th visit (for $m\le M$). Now the reward function is: 
$$\sum_{m=1}^M \sum_{j=1}^V d_j^{(m)}\mathbb{E}[x_j^{(m)}(\{\rho_k\}_{k=1}^K)].$$
 In order for our solution approach and guarantees to apply, we require that $d_j^{(m)}$ be a non-increasing function of $m$ (this ensures submodularity). We can build an approximation for any submodular function of the node visits by assigning $d_j^{(m)}$ to be the incremental gain for visiting node $j$ the $m$th time. A concrete example of this formulation is informative path planning where the goal is to maximize the reduction in entropy of the posterior distribution of node variables $\{f_j\}_{j=1}^V$, and $d_j^{(m)}$ represents the reduction in entropy of the posterior distribution of $f_j$ by taking the $m$th measurement. 

\ifdual
\emph{Dual problem: } Consider the `dual' problem where the objective is to maximize the expected number of surviving robots and the constraints are to visit each node $j$ with a specified probability $p_j$.  This problem is more closely related to the traveling salesman problem than the orienteering problem, since it has visit constraints and seeks to maximize cost savings.  We can find feasible solutions by truncating the objective function,
$$  \sum_{j=1}^V \min(\mathbb{E}[x_j(\{\rho_k\}_{k=1}^K)], p_j);$$
and then performing binary search with search variable $p_s$. For a given $p_s$ we solve the truncated TSO problem and check whether the visit probability constraints are satisfied. Note that no claim on optimality can be made since we assume each robot has the same $p_s$, but since even finding a feasible solution to this problem is difficult this approach is of practical use. 
 Practical applications for this problem arise when quality of service is paramount and the cost for guaranteeing service should be minimized. An example is internet streaming from high altitude balloons, where the goal is to guarantee coverage of a large region and servicing a particular area bears risk of the balloon being destroyed. 
\fi



\section{Approximate solution approach}\label{sec:solution}
Our approach to solving the TSO problem is to exploit submodularity of the objective function and then derive a $\alpha$-approximate greedy algorithm (as defined in Section \ref{sec:BG:approx}). Accordingly, in Section \ref{sec:sol:submod} we show that the objective function of TSO is submodular. In Section \ref{sec:sol:linear} we present a linearization of the greedy sub-problem, which in the context of the TSO entails finding a path which maximizes the discrete derivative of the expected number of nodes visited, at the partial set already constructed. We use this linearization to find a polynomial time $(p_s/\lambda)$-approximate greedy algorithm. Leveraging this result, we describe our \texttt{GreedySurvivors} algorithm for the TSO problem in Section \ref{sec:sol:algo}, discuss its approximation guarantee in Section \ref{sec:sol:guarantee}, and characterize its computational complexity in Section \ref{sec:sol:complexity}. Finally, in Section \ref{sec:sol:variants} we discuss algorithm modifications for a number of variants of the TSO problem.

\subsection{Submodularity of the Objective Function}\label{sec:sol:submod}
In this section we show that the objective function is a normalized, non-negative monotone submodular function. Recall that submodularity can be checked by using the discrete derivative. For the TSO, a straightforward calculation gives the discrete derivative of the objective function as
\begin{equation*}
\objectiveGrad{\rho}{\Xk{K}} = \sum_{j=1}^V \mathbb{E}[z_j(\rho)] d_j\prod_{k=1}^K (1-\mathbb{E}[z_j(\rho_k)]).
\end{equation*}
The value placed on each node is the product of the probability that the robot visits the node, the importance of the node, and the probability the node has not been visited by any of the $K$ paths $\{\rho_k\}_{k=1}^K$.
\begin{lemma}[Objective is submodular]\label{lemma:submod} The objective function for the TSO, 
\begin{equation*}
J(\{\rho_k\}_{k=1}^K) = \objective{\Xk{K}},
\end{equation*}
is normalized, non-negative, monotone and submodular.
\end{lemma}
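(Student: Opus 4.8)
The plan is to verify the four properties directly from closed-form expressions for $J$ and its discrete derivative. The starting point is the cross-path independence asserted in the problem setup, namely that $z_j(\rho)$ is independent of $z_j(\rho')$ for $\rho \neq \rho'$. This lets the expectation of the product factor, giving $\mathbb{E}[x_j(\{\rho_k\}_{k=1}^K)] = 1 - \prod_{k=1}^K(1-\mathbb{E}[z_j(\rho_k)])$, so that
\[
J(\{\rho_k\}_{k=1}^K) = \sum_{j=1}^V d_j\Big(1 - \prod_{k=1}^K (1-\mathbb{E}[z_j(\rho_k)])\Big).
\]
Each factor $1-\mathbb{E}[z_j(\rho_k)]$ lies in $[0,1]$, and this single fact drives all four claims.

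First, normalization and non-negativity are immediate. With no paths the empty product equals one, so every summand vanishes and $J(\emptyset)=0$; and since each $d_j>0$ and each product lies in $[0,1]$, every summand is non-negative, so $J\ge 0$.

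Next, for monotonicity and submodularity I would work from the discrete-derivative formula stated just before the lemma, which I would first re-derive in one line by cancelling the common factor $\prod_k(1-\mathbb{E}[z_j(\rho_k)])$ in the difference $J(X\cup\{\rho\}) - J(X)$:
\[
\Delta J(\rho\mid\{\rho_k\}_{k=1}^K) = \sum_{j=1}^V \mathbb{E}[z_j(\rho)]\, d_j \prod_{k=1}^K (1-\mathbb{E}[z_j(\rho_k)]).
\]
Monotonicity follows because every factor on the right is non-negative, so $\Delta J(\rho\mid X)\ge 0$. For submodularity, take $X\subseteq X'$; since each additional factor introduced in passing from the product over $X$ to the product over $X'$ is a number in $[0,1]$, we have $\prod_{\rho_k\in X}(1-\mathbb{E}[z_j(\rho_k)]) \ge \prod_{\rho_k\in X'}(1-\mathbb{E}[z_j(\rho_k)])$ for every $j$. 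Multiplying by the non-negative quantity $\mathbb{E}[z_j(\rho)]\, d_j$ and summing over $j$ yields $\Delta J(\rho\mid X)\ge \Delta J(\rho\mid X')$, which is exactly the discrete-derivative characterization of submodularity.

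There is no deep obstacle here; the result is essentially a weighted probabilistic-coverage function. The only points requiring care are bookkeeping ones: justifying the product form of $\mathbb{E}[x_j]$ from cross-path independence (so that the expectation factors across robots), and being precise that the ground set is the collection of feasible paths, so that ``adding a path'' is well defined and the nesting $X\subseteq X'$ is meaningful. An alternative, equally short route I would keep in reserve is to observe that each node term $g_j(X) = 1 - \prod_{\rho\in X}(1-\mathbb{E}[z_j(\rho)])$ is a standard monotone submodular coverage function, and that $J = \sum_j d_j g_j$ is a non-negative linear combination, hence inherits all four properties.
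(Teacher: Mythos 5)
Your proof is correct and follows essentially the same route as the paper: normalization from the empty product, non-negativity and monotonicity from the sign of the discrete derivative, and submodularity from the fact that enlarging the set $X\subseteq X'$ only multiplies each node's product term by additional factors in $[0,1]$. The extra care you take in justifying the product form of $\mathbb{E}[x_j]$ via cross-path independence, and the alternative view of $J$ as a weighted sum of coverage functions, are both consistent with (and slightly more explicit than) the paper's argument.
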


\begin{proof} The sum over an empty set is zero which immediately implies that the objective function is normalized. Because $d_j > 0$ and $\mathbb{E}[z_j(\cdot)] \in[0,1]$, the discrete derivative is everywhere non-negative. This implies that the objective function is both non-negative and monotone.
Now consider $X \subseteq X' \subseteq \feasible$ and $\rho \in \feasible \setminus X'$. To show submodularity, we must show that the discrete derivative is smaller at $X'$ than at $X$. Since $X \subseteq X'$ and $\mathbb{E}[z_j(\cdot)] \in [0,1]$,
\begin{equation*}
d_j\prod_{\rho_k\in X} (1-\mathbb{E}[z_j(\rho_k)]) \ge d_j\prod_{\rho_k\in X'} (1-\mathbb{E}[z_j(\rho_k)]).
\end{equation*}
This implies that
\begin{align*}
&\Delta J(x \mid X') = \sum_{j=1}^V \mathbb{E}[z_j(\rho)] d_j\prod_{\rho_k\in X'} (1-\mathbb{E}[z_j(\rho_k)])\\
&\qquad \le \sum_{j=1}^V \mathbb{E}[z_j(\rho)] d_j\prod_{\rho_k\in X} (1-\mathbb{E}[z_j(\rho_k)])=\Delta J(x\mid X).
\end{align*}
Therefore the objective function is submodular.
\end{proof}
Intuitively, this statement follows from the fact that the marginal gain of adding one more robot is proportional to the probability that nodes have not yet been visited, which is a decreasing function of the number of robots. This lemma shows that we may pose the TSO as a submodular maximization problem subject to a cardinality constraint, where $\mathcal{X}$ is the set of feasible paths and the cardinality constraint is the number of robots. 

\subsection{Linear Relaxation for Greedy Sub-problem}\label{sec:sol:linear}
As defined at the beginning of this section, the greedy sub-problem for the TSO at iteration $q$ requires us to find an element from $\feasible \setminus \{\rho_k\}_{k=1}^{q-1}$ which maximizes the discrete derivative at the partial set already constructed, $\{\rho_k\}_{k=1}^{q-1}$. This is very difficult for the TSO, because it requires finding a path which maximizes submodular node rewards subject to a distance constraint (this is the submodular orienteering problem). No polynomial time constant-factor approximation algorithm is known for general submodular orienteering problems \cite{CC-NK-MP:12}, and so in this section we design one specifically for the TSO.


We relax the problem by replacing the probability that the robot traversing path $\rho$ visits node $j$ by $\zeta_j$, which is the maximum probability that any robot following a feasible path can visit node $j$:
\begin{equation*}
\zeta_j = \max_{\rho \in \feasible} \mathbb{E}[z_j(\rho)].
\end{equation*}

For a given graph, this upper bound can be found easily by using Dijkstra's algorithm with log transformed edge weights $\logweight(e) := -\log(\edgeweight(e))$. Let $\mathbb{I}_j(\rho)$ be equal to 1 if $\rho$ attempts to visit node $j$ and 0 otherwise. For $q\le K$, let $c_j>0$ be the node weight $d_j$ times the probability that node $j$ has not been visited by robots following the paths $\{\rho_k\}_{k=1}^{q-1}$. We are then looking to find the path that maximizes the sum:

\begin{equation*}
\Delta\bar{J}(\rho \mid \{\rho_k\}_{k=1}^{q-1}) := \sum_{j=1}^V \mathbb{I}_j(\rho)\zeta_j c_j,
\end{equation*}
which represents an {\em optimistic} estimate of the actual reward. We can find this path by solving an orienteering problem: Recall that for the orienteering problem we provide node weights and a constraint on the sum of edge weights (referred to as a budget), and find the path which maximizes the node rewards along the path while guaranteeing that the sum of edge weights along the path is below the budget.

We use the modified graph $\mathcal{G}_{O}$, which has the same edges and nodes as $\mathcal{G}$ but has  edge weights $\logweight(e)$, budget $-\log(p_s)$, and node rewards $\nodeweight_q(j) = \zeta_jc_j$. Solving the orienteering problem on $\mathcal{G}_{O}$ will return a path such that $\sum_{e \in \rho} -\log(\edgeweight(e)) \le -\log(p_s)$, which is equivalent to $\mathbb{P}\{a_{\lvert\rho\rvert}(\rho)=1\} \ge p_s$, and the path will maximize the sum of node rewards, which is $\Delta\bar{J}(\rho \mid \{\rho_k\}_{k=1}^{q-1}$. 
  
Although solving the orienteering problem is NP-hard, several polynomial-time constant-factor approximation algorithms exist which guarantee that the returned objective is lower bounded by a factor of $1/\lambda \le 1$ of the optimal objective. For undirected planar graphs, \cite{KC-HPS:06} gives a guarantee $\lambda = (1+\epsilon)$, for undirected graphs \cite{CC-NK-MP:12} gives a guarantee $\lambda = (2+\epsilon)$, and for directed graphs \cite{CC-MP:05} gives a guarantee in terms of the number of nodes. Using such an oracle, we have the following guarantee: 
\begin{lemma}[Single robot constant-factor guarantee]\label{lemma:approx_greedy} Let \texttt{Orienteering} be a routine that solves the orienteering problem within constant-factor $1/\lambda$, that is for node weights $\nodeweight(j)$, path $\hat{\rho}$ output by the routine and any path $\rho \in \feasible$, 
\begin{equation*}
 \sum_{j=1}^V \mathbb{I}_j(\hat{\rho})\nodeweight(j) \ge  \frac{1}{\lambda}\sum_{j=1}^V \mathbb{I}_j(\rho)\nodeweight(j). 
\end{equation*}

Then for any $c_j > 0$ and any  $\rho \in \feasible$, the weighted expected number of nodes visited by a robot following path $\hat{\rho}$ satisfies
\begin{equation*}
\sum_{j=1}^Vc_j\mathbb{E}[z_j(\hat{\rho})] \ge \frac{p_s}{\lambda}\sum_{j=1}^V c_j\mathbb{E}[z_j(\rho)].
\end{equation*}
\end{lemma}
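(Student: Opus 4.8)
The plan is to sandwich the target inequality using the two natural relationships between the expected visit probability $\mathbb{E}[z_j(\cdot)]$, the optimistic estimate $\zeta_j$, and the attempt indicator $\mathbb{I}_j(\cdot)$, and then to use the oracle guarantee applied with node weights $\nodeweight(j) = \zeta_j c_j$ as the bridge connecting the two. Concretely, I would establish a chain $\sum_j c_j\mathbb{E}[z_j(\hat\rho)] \ge p_s\sum_j \mathbb{I}_j(\hat\rho)\zeta_j c_j \ge \frac{p_s}{\lambda}\sum_j \mathbb{I}_j(\rho)\zeta_j c_j \ge \frac{p_s}{\lambda}\sum_j c_j\mathbb{E}[z_j(\rho)]$, where the middle step is exactly the hypothesized $1/\lambda$ guarantee and the two outer steps are per-node bounds proved below.

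First I would bound the right-hand side from above. By the definition of $\zeta_j$ as the maximum visit probability over paths in $\feasible$, any feasible $\rho$ satisfies $\mathbb{E}[z_j(\rho)] \le \zeta_j$ whenever $\rho$ attempts node $j$, while $\mathbb{E}[z_j(\rho)] = 0$ when it does not; compactly, $\mathbb{E}[z_j(\rho)] \le \zeta_j\,\mathbb{I}_j(\rho)$ for every $j$. Multiplying by $c_j>0$ and summing yields the outermost right inequality. For the bridge, I note that the oracle output $\hat\rho$ is computed on $\mathcal{G}_O$ with node weights $\nodeweight(j)=\zeta_j c_j$ and budget $-\log(p_s)$, hence $\hat\rho\in\feasible$, and applying the hypothesis with competitor $\rho$ gives $\sum_j \mathbb{I}_j(\hat\rho)\zeta_j c_j \ge \frac{1}{\lambda}\sum_j \mathbb{I}_j(\rho)\zeta_j c_j$.

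The step I expect to carry the real content — and to produce the factor $p_s$ — is the lower bound on the left-hand side. The key observation is that $a_n(\hat\rho)=\prod_{i=1}^n s_i(\hat\rho)$ is non-increasing in $n$, so surviving to the terminus implies surviving to any earlier node. Since the nodes of a path are distinct, if $\hat\rho$ attempts node $j$ it does so at a unique step $n_j$, whence $\mathbb{E}[z_j(\hat\rho)] = \mathbb{P}\{a_{n_j}(\hat\rho)=1\} \ge \mathbb{P}\{a_{\lvert\hat\rho\rvert}(\hat\rho)=1\} \ge p_s$ by feasibility of $\hat\rho$. Combining this with $\zeta_j\le 1$ gives the per-node bound $\mathbb{E}[z_j(\hat\rho)] \ge p_s \ge p_s\zeta_j$ when $\mathbb{I}_j(\hat\rho)=1$ (and both sides vanish otherwise), i.e. $\mathbb{E}[z_j(\hat\rho)] \ge p_s\zeta_j\,\mathbb{I}_j(\hat\rho)$ for all $j$. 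Summing against $c_j$ furnishes the outermost left inequality, and chaining the three displays completes the argument. The only subtlety to handle carefully is the bookkeeping that makes $\mathbb{E}[z_j(\hat\rho)] \ge p_s$ hold uniformly over the visited nodes — this is precisely where the monotonicity of $a_n$ and the feasibility budget enter, and it is the reason the survival threshold $p_s$ appears in the bound rather than something weaker.
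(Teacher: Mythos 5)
Your proof is correct and follows essentially the same route as the paper's: the identical three-step chain $\sum_j c_j\mathbb{E}[z_j(\rho)] \le \sum_j \mathbb{I}_j(\rho)\zeta_j c_j \le \lambda \sum_j \mathbb{I}_j(\hat\rho)\zeta_j c_j$ combined with the feasibility bound $\mathbb{I}_j(\hat{\rho})p_s\zeta_j \le \mathbb{I}_j(\hat{\rho})p_s \le \mathbb{E}[z_j(\hat{\rho})]$. The only difference is that you spell out the monotonicity-of-$a_n$ justification for the last bound, which the paper asserts without elaboration.
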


\begin{proof}
By definition of $\zeta_j$ and the \texttt{Orienteering} routine, we have:
\begin{align*}
\sum_{j=1}^V c_j\mathbb{E}[z_j(\rho)] &\le \sum_{j=1}^V \mathbb{I}_j(\rho)\zeta_jc_j \le \lambda \sum_{j=1}^V \mathbb{I}_j(\hat{\rho})\zeta_jc_j.
\end{align*}
Because path $\hat{\rho}$ is feasible $\mathbb{I}_j(\hat{\rho})p_s\zeta_j \le \mathbb{I}_j(\hat{\rho})p_s  \le \mathbb{E}[z_j(\hat{\rho})] $, which combined with the equation above completes the proof.

\end{proof}

This is a remarkable statement because it guarantees that, as long as $p_s$ is not too small, the solution to the linear relaxation will give nearly the optimal value in the original problem. The intuition is that for $p_s$ close to unity no feasible path can be very risky and so the probability that a robot \emph{actually} reaches a node will not be too far from the maximum probability that it \emph{could} reach the node.

\subsection{Greedy Approximation for the TSO}\label{sec:sol:algo}
By choosing $c_j$ to be the node weight $d_j$ times the probability that node $j$ has \emph{not} yet been visited, the linearized greedy algorithm above has guarantee $\alpha = p_s/\lambda$, which means we can use it as an $(p_s/\lambda)$-approximate greedy selection step to construct a set of $K$ paths.

To get the upper bounds $\zeta_j$ we define the method \texttt{Dijkstra}($\mathcal{G}$, $i$, $j$), which returns the length of the shortest path from $i$ to $j$ on the edge weighted graph $\mathcal{G}$ using Dijkstra's algorithm. The \texttt{Orienteering}$(\mathcal{G}, \nodeweight)$ routine solves the orienteering problem (assuming $v_s = 1$, $v_t = V$) within factor $1/\lambda$ given an edge weighted graph $\mathcal{G}$ and node rewards $\nodeweight$. Pseudocode for our algorithm is given in Figure \ref{alg:heur}. We begin by forming the graph $\mathcal{G}_{O}$ with log-transformed edge weights $\logweight(e)$, and then use Dijkstra's algorithm to compute the maximum probability that a node can be reached. For each robot $k=1,\dots,K$, we solve the orienteering problem to greedily choose paths that maximize the discrete derivative of $\bar{J}$, updating the derivative after choosing each path. \\

\begin{figure}[h]
\begin{algorithmic}[1]
\Procedure{GreedySurvivors}{$\mathcal{G},K$}
\State Form $\mathcal{G}_{O}$ from $\mathcal{G}$, such that $v_s=1$, $v_t = V$
\For{$j=1,\dots,V$}
\State $\zeta_j \gets \exp(-\texttt{Dijkstra}(\mathcal{G}_{O},1,j))$
\State $\nodeweight_1(j) \gets \zeta_jd_j$
\EndFor
\State $\rho_1 \gets \texttt{Orienteering}(\mathcal{G}_{O},\nodeweight_1)$ 
\For{$k=1,\dots,K-1$}
\State $\mathbb{E}[a_0(\rho_k)] \gets 1$
\For{$n = 1,\dots,\lvert \rho_k \rvert$}
\State $\mathbb{E}[a_n(\rho_k)] \gets \mathbb{E}[a_{n-1}(\rho_k)]\edgeweight(e_{\rho_k}^n)$ 
\State $\nodeweight_{k+1}(\rho_k(n)) \gets (1-\mathbb{E}[a_n(\rho_k)])\nodeweight_k(\rho_k(n))$
\EndFor
\State $\rho_{k+1} \gets \texttt{Orienteering}(\mathcal{G}_{O}, \nodeweight_{k+1})$
\EndFor
\EndProcedure
\end{algorithmic}˛
\caption{Approximate greedy algorithm for solving the TSO problem. }
\label{alg:heur}
\end{figure}

\subsection{Approximation Guarantees}\label{sec:sol:guarantee}
In this section we combine the results from Section \ref{sec:BG:approx} and \ref{sec:sol:linear} to give a constant-factor approximation for the \texttt{GreedySurvivors} algorithm:

\begin{theorem}[Multi-robot constant-factor guarantee]\label{thm:approx_guarantee} Let $1/\lambda$ be the constant-factor guarantee for the \texttt{Orienteering} routine as in Lemma 1, and assign robot $\ell$ the path $\hat{\rho}_{\ell}$ output by the orienteering routine given graph $\mathcal{G}_\text{O}$ with node weights

\begin{equation*} 
\nodeweight_\ell(j) = \zeta_jd_j\prod_{\iota=1}^{\ell-1}\left(1-\mathbb{E}\left[z_j(\hat{\rho}_{\iota})\right]\right).
\end{equation*}

Let $X^*_K = \{\rho_k^*\}_{k=1}^K$ be an optimal solution to the TSO with $K$ robots. Then the weighted expected number of nodes visited by a team of $L\ge K$ robots following the paths $\hat{X}_L = \{\hat{\rho}_{\ell}\}_{\ell=1}^L$ is at least
\begin{align*}
&\objective{\hat{X}_L} \ge \left(1-e^{-\frac{p_sL}{\lambda K}}\right)\objective{X^*_K}.
\end{align*}
\end{theorem}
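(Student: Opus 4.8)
The plan is to recognize that Theorem~\ref{thm:approx_guarantee} is the composition of three results already in hand: the submodularity of the objective (Lemma~\ref{lemma:submod}), the single-robot guarantee (Lemma~\ref{lemma:approx_greedy}), and the generalized greedy bound (Theorem~\ref{thm:extension}). Concretely, I would show that \texttt{GreedySurvivors} is a genuine $(p_s/\lambda)$-approximate greedy algorithm for maximizing $J$ over the ground set $\feasible$ subject to the cardinality constraint $K$, and then invoke Theorem~\ref{thm:extension} with $\alpha = p_s/\lambda$ and the element count $L \ge K$.

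First I would fix the setting required by Theorem~\ref{thm:extension}: take $\mathcal{X} = \feasible$ as the ground set of feasible paths, with the ``elements'' being paths, so that the cardinality-$K$ maximizer of $J$ is exactly the optimal TSO solution, i.e. $\max_{\lvert X\rvert = K} J(X) = J(X^*_K)$. By Lemma~\ref{lemma:submod}, $J$ is normalized, non-negative, monotone, and submodular, so all hypotheses of Theorem~\ref{thm:extension} are satisfied and it remains only to verify the per-iteration approximation factor.

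The central step is to identify the orienteering objective optimized at iteration $\ell$ with the discrete derivative of $J$. Writing $c_j^{(\ell)} = d_j \prod_{\iota=1}^{\ell-1}\bigl(1 - \mathbb{E}[z_j(\hat\rho_\iota)]\bigr)$, the node weights used by the algorithm are $\nodeweight_\ell(j) = \zeta_j c_j^{(\ell)}$, while the discrete derivative of $J$ at the partial set $\hat X_{\ell-1} = \{\hat\rho_\iota\}_{\iota=1}^{\ell-1}$ is precisely
\[
\objectiveGrad{\rho}{\hat X_{\ell-1}} = \sum_{j=1}^V c_j^{(\ell)}\, \mathbb{E}[z_j(\rho)].
\]
Since $\hat\rho_\ell$ is exactly the output of \texttt{Orienteering} on $\mathcal{G}_{O}$ with node rewards $\nodeweight_\ell(j) = \zeta_j c_j^{(\ell)}$, applying Lemma~\ref{lemma:approx_greedy} with $c_j = c_j^{(\ell)}$ gives, for every feasible $\rho$,
\[
\sum_{j=1}^V c_j^{(\ell)}\, \mathbb{E}[z_j(\hat\rho_\ell)] \ge \frac{p_s}{\lambda} \sum_{j=1}^V c_j^{(\ell)}\, \mathbb{E}[z_j(\rho)],
\]
which is exactly $\objectiveGrad{\hat\rho_\ell}{\hat X_{\ell-1}} \ge \tfrac{p_s}{\lambda}\, \objectiveGrad{\rho}{\hat X_{\ell-1}}$ for all $\rho \in \feasible$. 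As $\feasible \setminus \hat X_{\ell-1} \subseteq \feasible$, the $\alpha$-approximate greedy condition holds with $\alpha = p_s/\lambda$ at every iteration, and $\alpha \le 1$ because $p_s \le 1$ and $1/\lambda \le 1$.

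With the per-step factor established, Theorem~\ref{thm:extension} applied to $J$ with $\alpha = p_s/\lambda$ and $L \ge K$ yields
\[
\objective{\hat X_L} \ge \left(1 - e^{-\frac{p_s L}{\lambda K}}\right) \objective{X^*_K},
\]
which is the claim. I expect the difficulty here to be bookkeeping rather than a hard inequality: the substantive work lives in the already-proven lemmas, and the obstacle is to verify that the algorithm's node weights $\nodeweight_\ell(j)$ coincide with $\zeta_j c_j^{(\ell)}$ for the same coefficients $c_j^{(\ell)}$ that multiply $\mathbb{E}[z_j(\rho)]$ in $\objectiveG$, and that Lemma~\ref{lemma:approx_greedy}, stated for an arbitrary feasible comparison path, is strong enough to cover the quantifier over $\feasible \setminus \hat X_{\ell-1}$ in the definition of an $\alpha$-approximate greedy step. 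One minor technical point to note is that Lemma~\ref{lemma:approx_greedy} is stated for $c_j > 0$, whereas $c_j^{(\ell)}$ can vanish when a node is reached with probability one by an earlier robot; the lemma's proof goes through verbatim for $c_j \ge 0$, so this causes no difficulty.
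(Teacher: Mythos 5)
Your proposal is correct and follows essentially the same route as the paper's own proof: identify the discrete derivative $\objectiveGrad{\rho}{\hat X_{\ell-1}}$ with the weighted sum $\sum_j c_j^{(\ell)}\mathbb{E}[z_j(\rho)]$, invoke Lemma~\ref{lemma:approx_greedy} to certify the $(p_s/\lambda)$-approximate greedy condition, and then apply Theorem~\ref{thm:extension} using the submodularity established in Lemma~\ref{lemma:submod}. Your additional remark about extending Lemma~\ref{lemma:approx_greedy} from $c_j>0$ to $c_j\ge 0$ is a small but genuine point of care that the paper's own two-sentence proof leaves implicit.
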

\begin{proof}
Using Lemma \ref{lemma:approx_greedy} with $c_j = d_j\prod_{\iota=1}^{\ell-1}(1-\mathbb{E}[z_j(\hat{\rho}_{\iota})])$, we have a constant-factor guarantee $\alpha = p_s/\lambda$ for the linearized greedy algorithm. Applying Theorem \ref{thm:extension} to our objective function (which by Lemma \ref{lemma:submod} is normalized non-negative, monotone, and submodular) we have the desired result.
\end{proof}

In many scenarios of interest $p_s$ is quite close to 1, since robots are quite valuable.  For $L=K$ this theorem gives an $\approxfact$ guarantee for the output of our algorithm.  This bound holds for any team size, and guarantees that the output of the  (polynomial time) linearized greedy algorithm will have a similar reward to the output of the (exponential time) optimal algorithm.

Taking $L > K$ gives a practical way of testing how much more efficient the allocation for $K$ robots could be. For example, if $L\frac{p_s}{\lambda}=6K$ we have a $(1-1/e^6) \simeq 0.997$ factor approximation for the optimal value achieved by $K$ robots. We use this approach to generate tight upper bounds for our experimental results.

Note that this theorem also guarantees that as $L\to\infty$, the output of our algorithm has at least the same value as the optimum, which emphasizes the importance of guarantees for \emph{small} teams. 

\subsection{Computational Complexity}\label{sec:sol:complexity}
Suppose that the complexity of the $\texttt{Orienteering}$ oracle is $C_O$. Then the complexity of our algorithm is:

\begin{equation*}
O(V^2\log(V)) + O(KV^2) + O(KC_O) = O(KC_O).
\end{equation*}

The first term is the complexity of running Dijkstra's to calculate $\zeta_j$ for all nodes, the second term is the complexity of updating the $V$ weights $K$ times (each update costs at most $\lvert \rho_k\rvert \le V$ flops), and the final term is the complexity of solving the $K$ orienteering problems. For many approximation algorithms $C_O = V^{O(1/\epsilon)}$, and so the complexity is dominated by $KC_O$. If a suitable approximation algorithm is used for $\texttt{Orienteering}$ (such as \cite{CC-MP:05}, \cite{CC-NK-MP:12}, \cite{KC-HPS:06}), this algorithm will have reasonable computation time even for large team sizes.

\subsection{Algorithm Variants}\label{sec:sol:variants}
Below we describe how to solve the variants from Section \ref{sec:prob_variants} by modifying the \texttt{GreedySurvivors} routine.
\subsubsection{Edge Rewards and Patrolling} After redefining the problem variables as described in Section \ref{sec:prob_variants} we can define $\zeta_{i,j} = \zeta_i \edgeweight(i,j)$, which is the largest probability that edge $(i,j)$ is successfully taken. The linearized greedy algorithm will still have constant-factor guarantee $\alpha = p_s/\lambda$, but now requires solving an \emph{arc} orienteering problem. Constant-factor approximations for the arc orienteering problem can be found using algorithms for the OP as demonstrated in \cite{DG-CK-KM-GP-NV:15}: for an undirected graph $\lambda = 6+\epsilon + o(1)$ in polynomial time $V^{O(1/\epsilon)}$. The arguments for Theorem \ref{thm:approx_guarantee} are the same as in the node reward case.

\subsubsection{Multiple Visits and IPP}
The multiple visits variant adds rewards for visiting a node up to $M$ times. We can linearize the problem by choosing $c_j$ as the sum over $m=1,\dots,M$ of $d_j^{(m)}$ times the probability that exactly $m-1$ previous robots visit node $j$.  Because $c_j$ is still a positive constant, we can apply Lemma \ref{lemma:approx_greedy}.  The only step left to show is that the objective is still submodular, for which we require that $d_j^{(m)}$ be a non-increasing function in $m$. To see why this is the case, consider two teams which each visit node $j$ once. The cumulative reward the teams receive for visiting node $j$ is $2d_j^{(1)}$. If the teams are combined, then node $j$ is visited twice and so the combined team gets reward $d_j^{(1)}+d_j^{(2)} \le 2d_j^{(1)}$.

It is important to note that the complexity results change unfavorably. To linearize the greedy problem, we must compute the probability that exactly $m \le K$ robots visits node $j$, which requires evaluating the $K$ choose $m$ visit events. If the number of profitable visits is at most $M<K$ then the number of visit events is a polynomial function of the team size (bounded by $K^M/M!$), but if $K < M$ then there are $2^K$ visit events which must be evaluated.
\ifdual
\subsubsection{Dual problem}
We can apply the \texttt{GreedySurvivors} algorithm to the dual problem by truncating the objective as described in Section \ref{sec:prob_variants}. The truncation preserves submodularity \cite{AK-DG:12}, and the discrete derivative does not change beyond getting truncated. Because of this our theoretical results can be extended directly to demonstrate that we come close to solving each feasibility problem optimally. We can use the truncated TSO with binary search to find a feasible solution to this problem with maximum team size $K_\text{max}$:
\begin{enumerate}
\item For given $p_s$, solve the truncated TSO to find the smallest $K$ feasible.
\item If $K \le K_{\text{max}}$, store $p_s$ as a lower bound and increase $p_s$. 
\item Else if $K > K_\text{max}$, store $p_s$ as an upper bound on the optimal and decrease $p_s$. 
\end{enumerate}
This procedure will output the largest value of $p_s$ such that the constraints are satisfied by a team of at most $K_\text{max}$ robots, but we do not have guarantees on how close the output is to the optimal. 
\fi

\section{Numerical Experiments}\label{sec:numbers}
\subsection{Verification of Bounds}
We consider a TSO problem on the graph shown in Figure \ref{fig:hexgraph}: the central starting node has `safe' transitions to six nodes, which have `unsafe' transitions to the remaining twelve nodes. Due to the symmetry of the problem we can quickly compute an optimal policy for a team of six robots, which is shown in Figure \ref{fig:hexoptplan}. The output of the greedy algorithm is shown in Figure \ref{fig:hexplan}. The \texttt{GreedySurvivors} solution comes close to the optimal, although the initial path planned (shown by the thick dark blue line) does not anticipate its impact on later paths. The expected number of nodes visited by robots following optimal paths, greedy paths, and the upper bound are shown in Figure \ref{fig:performance}. Note that the upper bound is close to the optimal, even for small teams, and that the \texttt{GreedySurvivors} performance is nearly optimal.


\begin{figure}[t]
\label{fig:hex}
	\centering
	\subfigure[Graph $\mathcal{G}$]{\label{fig:hexgraph}
		\centering
		\begin{tikzpicture}
		\definecolor{c1}{rgb}{0.2980392156862745,0.4470588235294118,0.6901960784313725}
		\definecolor{c2}{rgb}{0.3333333333333333,0.6588235294117647,0.40784313725490196}
		\definecolor{c3}{rgb}{0.7686274509803922,0.3058823529411765,0.3215686274509804}
		\definecolor{c4}{rgb}{0.5058823529411764,0.4470588235294118,0.6980392156862745}
		\definecolor{c5}{rgb}{0.8,0.7254901960784313,0.4549019607843137}
		\definecolor{c6}{rgb}{0.39215686274509803,0.7098039215686275,0.803921568627451}
		\definecolor{bg}{rgb}{0.3,0.3,0.3}
		\node () at (0,0){};
		\draw[opacity=0.4] (1.0,1.0) edge[-,line width=1pt, color=bg] (1.0,0.5);
		\draw[opacity=0.4] (1.0,1.0) edge[-,line width=1pt, color=bg] (1.4330127018922192,0.75);
		\draw[opacity=0.4] (1.0,1.0) edge[-,line width=1pt, color=bg] (0.5669872981077807,0.75);
		\draw (1.0,0.5) edge[-,line width=2.0pt, color=bg] (1.0,0.0);
		\draw[opacity=0.4] (1.0,0.5) edge[-,line width=1pt, color=bg] (1.4330127018922192,0.75);
		\draw[opacity=0.4] (1.0,0.5) edge[-,line width=1pt, color=bg] (1.4330127018922194,0.25);
		\draw[opacity=0.4] (1.0,0.5) edge[-,line width=1pt, color=bg] (0.5669872981077807,0.75);
		\draw[opacity=0.4] (1.0,0.5) edge[-,line width=1pt, color=bg] (0.5669872981077808,0.25);
		\draw (1.0,0.0) edge[-,line width=2.0pt, color=bg] (1.0,-0.5);
		\draw (1.0,0.0) edge[-,line width=2.0pt, color=bg] (1.4330127018922194,0.25);
		\draw (1.0,0.0) edge[-,line width=2.0pt, color=bg] (1.4330127018922194,-0.25);
		\draw (1.0,0.0) edge[-,line width=2.0pt, color=bg] (0.5669872981077808,0.25);
		\draw (1.0,0.0) edge[-,line width=2.0pt, color=bg] (0.5669872981077808,-0.25);
		\draw[opacity=0.4] (1.0,-0.5) edge[-,line width=1pt, color=bg] (1.0000000000000002,-1.0);
		\draw[opacity=0.4] (1.0,-0.5) edge[-,line width=1pt, color=bg] (1.4330127018922194,-0.25);
		\draw[opacity=0.4] (1.0,-0.5) edge[-,line width=1pt, color=bg] (1.4330127018922194,-0.75);
		\draw[opacity=0.4] (1.0,-0.5) edge[-,line width=1pt, color=bg] (0.5669872981077808,-0.25);
		\draw[opacity=0.4] (1.0,-0.5) edge[-,line width=1pt, color=bg] (0.5669872981077808,-0.75);
		\draw[opacity=0.4] (1.0000000000000002,-1.0) edge[-,line width=1pt, color=bg] (1.4330127018922194,-0.75);
		\draw[opacity=0.4] (1.0000000000000002,-1.0) edge[-,line width=1pt, color=bg] (0.5669872981077808,-0.75);
		\draw[opacity=0.4] (1.4330127018922192,0.75) edge[-,line width=1pt, color=bg] (1.4330127018922194,0.25);
		\draw[opacity=0.4] (1.4330127018922192,0.75) edge[-,line width=1pt, color=bg] (1.8660254037844386,0.5);
		\draw[opacity=0.4] (1.4330127018922194,0.25) edge[-,line width=1pt, color=bg] (1.4330127018922194,-0.25);
		\draw[opacity=0.4] (1.4330127018922194,0.25) edge[-,line width=1pt, color=bg] (1.8660254037844386,0.5);
		\draw[opacity=0.4] (1.4330127018922194,0.25) edge[-,line width=1pt, color=bg] (1.8660254037844388,0.0);
		\draw[opacity=0.4] (1.4330127018922194,-0.25) edge[-,line width=1pt, color=bg] (1.4330127018922194,-0.75);
		\draw[opacity=0.4] (1.4330127018922194,-0.25) edge[-,line width=1pt, color=bg] (1.8660254037844388,0.0);
		\draw[opacity=0.4] (1.4330127018922194,-0.25) edge[-,line width=1pt, color=bg] (1.8660254037844388,-0.5);
		\draw[opacity=0.4] (1.4330127018922194,-0.75) edge[-,line width=1pt, color=bg] (1.8660254037844388,-0.5);
		\draw[opacity=0.4] (0.5669872981077807,0.75) edge[-,line width=1pt, color=bg] (0.5669872981077808,0.25);
		\draw[opacity=0.4] (0.5669872981077807,0.75) edge[-,line width=1pt, color=bg] (0.1339745962155614,0.5);
		\draw[opacity=0.4] (0.5669872981077808,0.25) edge[-,line width=1pt, color=bg] (0.5669872981077808,-0.25);
		\draw[opacity=0.4] (0.5669872981077808,0.25) edge[-,line width=1pt, color=bg] (0.1339745962155614,0.5);
		\draw[opacity=0.4] (0.5669872981077808,0.25) edge[-,line width=1pt, color=bg] (0.13397459621556151,0.0);
		\draw[opacity=0.4] (0.5669872981077808,-0.25) edge[-,line width=1pt, color=bg] (0.5669872981077808,-0.75);
		\draw[opacity=0.4] (0.5669872981077808,-0.25) edge[-,line width=1pt, color=bg] (0.13397459621556151,0.0);
		\draw[opacity=0.4] (0.5669872981077808,-0.25) edge[-,line width=1pt, color=bg] (0.13397459621556151,-0.5);
		\draw[opacity=0.4] (0.5669872981077808,-0.75) edge[-,line width=1pt, color=bg] (0.13397459621556151,-0.5);
		\draw[opacity=0.4] (1.8660254037844386,0.5) edge[-,line width=1pt, color=bg] (1.8660254037844388,0.0);
		\draw[opacity=0.4] (1.8660254037844388,0.0) edge[-,line width=1pt, color=bg] (1.8660254037844388,-0.5);
		\draw[opacity=0.4] (0.1339745962155614,0.5) edge[-,line width=1pt, color=bg] (0.13397459621556151,0.0);
		\draw[opacity=0.4] (0.13397459621556151,0.0) edge[-,line width=1pt, color=bg] (0.13397459621556151,-0.5);
		\node[circle,fill=black,minimum size = 5pt, inner sep=0pt] () at (1.0,1.0) {};
		\node[circle,fill=black,minimum size = 5pt, inner sep=0pt] () at (1.0,0.5) {};
		\node[circle,fill=black,minimum size = 5pt, inner sep=0pt] () at (1.0,0.0) {};
		\node[circle,fill=black,minimum size = 5pt, inner sep=0pt] () at (1.0,-0.5) {};
		\node[circle,fill=black,minimum size = 5pt, inner sep=0pt] () at (1.0000000000000002,-1.0) {};
		\node[circle,fill=black,minimum size = 5pt, inner sep=0pt] () at (1.4330127018922192,0.75) {};
		\node[circle,fill=black,minimum size = 5pt, inner sep=0pt] () at (1.4330127018922194,0.25) {};
		\node[circle,fill=black,minimum size = 5pt, inner sep=0pt] () at (1.4330127018922194,-0.25) {};
		\node[circle,fill=black,minimum size = 5pt, inner sep=0pt] () at (1.4330127018922194,-0.75) {};
		\node[circle,fill=black,minimum size = 5pt, inner sep=0pt] () at (0.5669872981077807,0.75) {};
		\node[circle,fill=black,minimum size = 5pt, inner sep=0pt] () at (0.5669872981077808,0.25) {};
		\node[circle,fill=black,minimum size = 5pt, inner sep=0pt] () at (0.5669872981077808,-0.25) {};
		\node[circle,fill=black,minimum size = 5pt, inner sep=0pt] () at (0.5669872981077808,-0.75) {};
		\node[circle,fill=black,minimum size = 5pt, inner sep=0pt] () at (1.8660254037844386,0.5) {};
		\node[circle,fill=black,minimum size = 5pt, inner sep=0pt] () at (1.8660254037844388,0.0) {};
		\node[circle,fill=black,minimum size = 5pt, inner sep=0pt] () at (1.8660254037844388,-0.5) {};
		\node[circle,fill=black,minimum size = 5pt, inner sep=0pt] () at (0.1339745962155614,0.5) {};
		\node[circle,fill=black,minimum size = 5pt, inner sep=0pt] () at (0.13397459621556151,0.0) {};
		\node[circle,fill=black,minimum size = 5pt, inner sep=0pt] () at (0.13397459621556151,-0.5) {};
		\end{tikzpicture}}%
		\hspace{1.5em}
		\subfigure[Optimal $X_6^*$]{\label{fig:hexoptplan}
		\centering
		\begin{tikzpicture}
		\definecolor{c1}{rgb}{0.2980392156862745,0.4470588235294118,0.6901960784313725}
		\definecolor{c2}{rgb}{0.3333333333333333,0.6588235294117647,0.40784313725490196}
		\definecolor{c3}{rgb}{0.7686274509803922,0.3058823529411765,0.3215686274509804}
		\definecolor{c4}{rgb}{0.5058823529411764,0.4470588235294118,0.6980392156862745}
		\definecolor{c5}{rgb}{0.8,0.7254901960784313,0.4549019607843137}
		\definecolor{c6}{rgb}{0.39215686274509803,0.7098039215686275,0.803921568627451}
		\definecolor{bg}{rgb}{0.3,0.3,0.3}
		\draw (1.0,0.0) edge[-,line width=2.5pt, color=c1] (1.0,0.5);
		\draw (1.0,0.5) edge[-,line width=2.5pt, color=c1] (1.0,1.0);
		\draw (1.0,1.0) edge[-,line width=2.5pt, color=c1] (1.4330127018922192,0.75);
		\draw (1.4330127018922192,0.75) edge[-,line width=2.5pt, color=c1] (1.4330127018922194,0.25);
		\draw (1.4330127018922194,0.25) edge[-,line width=2.5pt, color=c1] (1.0,0.0);
		\draw (1.0,0.0) edge[-,line width=1.5pt, color=c2] (1.0,0.5);
		\draw (1.0,0.5) edge[-,line width=1.5pt, color=c2] (0.5669872981077807,0.75);
		\draw (0.5669872981077807,0.75) edge[-,line width=1.5pt, color=c2] (0.1339745962155614,0.5);
		\draw (0.1339745962155614,0.5) edge[-,line width=1.5pt, color=c2] (0.5669872981077808,0.25);
		\draw (0.5669872981077808,0.25) edge[-,line width=1.5pt, color=c2] (1.0,0.0);
		\draw (1.0,0.0) edge[-,line width=1.5pt, color=c3] (1.4330127018922194,0.25);
		\draw (1.4330127018922194,0.25) edge[-,line width=1.5pt, color=c3] (1.8660254037844386,0.5);
		\draw (1.8660254037844386,0.5) edge[-,line width=1.5pt, color=c3] (1.8660254037844388,0.0);
		\draw (1.8660254037844388,0.0) edge[-,line width=1.5pt, color=c3] (1.4330127018922194,-0.25);
		\draw (1.4330127018922194,-0.25) edge[-,line width=1.5pt, color=c3] (1.0,0.0);
		\draw (1.0,0.0) edge[-,line width=1.5pt, color=c4] (1.4330127018922194,-0.25);
		\draw (1.4330127018922194,-0.25) edge[-,line width=1.5pt, color=c4] (1.8660254037844388,-0.5);
		\draw (1.8660254037844388,-0.5) edge[-,line width=1.5pt, color=c4] (1.4330127018922194,-0.75);
		\draw (1.4330127018922194,-0.75) edge[-,line width=1.5pt, color=c4] (1.0,-0.5);
		\draw (1.0,-0.5) edge[-,line width=1.5pt, color=c4] (1.0,0.0);
		\draw (1.0,0.0) edge[-,line width=1.5pt, color=c5] (1.0,-0.5);
		\draw (1.0,-0.5) edge[-,line width=1.5pt, color=c5] (1.0000000000000002,-1.0);
		\draw (1.0000000000000002,-1.0) edge[-,line width=1.5pt, color=c5] (0.5669872981077808,-0.75);
		\draw (0.5669872981077808,-0.75) edge[-,line width=1.5pt, color=c5] (0.5669872981077808,-0.25);
		\draw (0.5669872981077808,-0.25) edge[-,line width=1.5pt, color=c5] (1.0,0.0);
		\draw (1.0,0.0) edge[-,line width=1.5pt, color=c6] (0.5669872981077808,-0.25);
		\draw (0.5669872981077808,-0.25) edge[-,line width=1.5pt, color=c6] (0.13397459621556151,-0.5);
		\draw (0.13397459621556151,-0.5) edge[-,line width=1.5pt, color=c6] (0.13397459621556151,0.0);
		\draw (0.13397459621556151,0.0) edge[-,line width=1.5pt, color=c6] (0.5669872981077808,0.25);
		\draw (0.5669872981077808,0.25) edge[-,line width=1.5pt, color=c6] (1.0,0.0);
		\node[circle,fill=black,minimum size = 5pt, inner sep=0pt] () at (1.0,1.0) {};
		\node[circle,fill=black,minimum size = 5pt, inner sep=0pt] () at (1.0,0.5) {};
		\node[circle,fill=black,minimum size = 5pt, inner sep=0pt] () at (1.0,0.0) {};
		\node[circle,fill=black,minimum size = 5pt, inner sep=0pt] () at (1.0,-0.5) {};
		\node[circle,fill=black,minimum size = 5pt, inner sep=0pt] () at (1.0000000000000002,-1.0) {};
		\node[circle,fill=black,minimum size = 5pt, inner sep=0pt] () at (1.4330127018922192,0.75) {};
		\node[circle,fill=black,minimum size = 5pt, inner sep=0pt] () at (1.4330127018922194,0.25) {};
		\node[circle,fill=black,minimum size = 5pt, inner sep=0pt] () at (1.4330127018922194,-0.25) {};
		\node[circle,fill=black,minimum size = 5pt, inner sep=0pt] () at (1.4330127018922194,-0.75) {};
		\node[circle,fill=black,minimum size = 5pt, inner sep=0pt] () at (0.5669872981077807,0.75) {};
		\node[circle,fill=black,minimum size = 5pt, inner sep=0pt] () at (0.5669872981077808,0.25) {};
		\node[circle,fill=black,minimum size = 5pt, inner sep=0pt] () at (0.5669872981077808,-0.25) {};
		\node[circle,fill=black,minimum size = 5pt, inner sep=0pt] () at (0.5669872981077808,-0.75) {};
		\node[circle,fill=black,minimum size = 5pt, inner sep=0pt] () at (1.8660254037844386,0.5) {};
		\node[circle,fill=black,minimum size = 5pt, inner sep=0pt] () at (1.8660254037844388,0.0) {};
		\node[circle,fill=black,minimum size = 5pt, inner sep=0pt] () at (1.8660254037844388,-0.5) {};
		\node[circle,fill=black,minimum size = 5pt, inner sep=0pt] () at (0.1339745962155614,0.5) {};
		\node[circle,fill=black,minimum size = 5pt, inner sep=0pt] () at (0.13397459621556151,0.0) {};
		\node[circle,fill=black,minimum size = 5pt, inner sep=0pt] () at (0.13397459621556151,-0.5) {};
		\end{tikzpicture}}%
		\hspace{1.5em}
		\subfigure[Greedy $\bar{X}_6$]{\label{fig:hexplan}
		\centering
		\begin{tikzpicture}
		\definecolor{c1}{rgb}{0.2980392156862745,0.4470588235294118,0.6901960784313725}
		\definecolor{c2}{rgb}{0.3333333333333333,0.6588235294117647,0.40784313725490196}
		\definecolor{c3}{rgb}{0.7686274509803922,0.3058823529411765,0.3215686274509804}
		\definecolor{c4}{rgb}{0.5058823529411764,0.4470588235294118,0.6980392156862745}
		\definecolor{c5}{rgb}{0.8,0.7254901960784313,0.4549019607843137}
		\definecolor{c6}{rgb}{0.39215686274509803,0.7098039215686275,0.803921568627451}
		\definecolor{bg}{rgb}{0.3,0.3,0.3}
		\draw (1.0,0.0) edge[-,line width=2.5pt, color=c1] (1.4330127018922194,-0.25);
		\draw (1.4330127018922194,-0.25) edge[-,line width=2.5pt, color=c1] (1.0,-0.5);
		\draw (1.0,-0.5) edge[-,line width=2.5pt, color=c1] (0.5669872981077808,-0.25);
		\draw (0.5669872981077808,-0.25) edge[-,line width=2.5pt, color=c1] (0.5669872981077808,0.25);
		\draw (0.5669872981077808,0.25) edge[-,line width=2.5pt, color=c1] (1.0,0.0);
		\draw (1.0,0.0) edge[-,line width=1.5pt, color=c2] (1.4330127018922194,0.25);
		\draw (1.4330127018922194,0.25) edge[-,line width=1.5pt, color=c2] (1.4330127018922192,0.75);
		\draw (1.4330127018922192,0.75) edge[-,line width=1.5pt, color=c2] (1.0,1.0);
		\draw (1.0,1.0) edge[-,line width=1.5pt, color=c2] (1.0,0.5);
		\draw (1.0,0.5) edge[-,line width=1.5pt, color=c2] (1.0,0.0);
		\draw (1.0,0.0) edge[-,line width=1.5pt, color=c3] (0.5669872981077808,0.25);
		\draw (0.5669872981077808,0.25) edge[-,line width=1.5pt, color=c3] (0.1339745962155614,0.5);
		\draw (0.1339745962155614,0.5) edge[-,line width=1.5pt, color=c3] (0.5669872981077807,0.75);
		\draw (0.5669872981077807,0.75) edge[-,line width=1.5pt, color=c3] (1.0,0.5);
		\draw (1.0,0.5) edge[-,line width=1.5pt, color=c3] (1.0,0.0);
		\draw (1.0,0.0) edge[-,line width=1.5pt, color=c4] (0.5669872981077808,-0.25);
		\draw (0.5669872981077808,-0.25) edge[-,line width=1.5pt, color=c4] (0.13397459621556151,-0.5);
		\draw (0.13397459621556151,-0.5) edge[-,line width=1.5pt, color=c4] (0.5669872981077808,-0.75);
		\draw (0.5669872981077808,-0.75) edge[-,line width=1.5pt, color=c4] (1.0,-0.5);
		\draw (1.0,-0.5) edge[-,line width=1.5pt, color=c4] (1.0,0.0);
		\draw (1.0,0.0) edge[-,line width=1.5pt, color=c5] (1.0,-0.5);
		\draw (1.0,-0.5) edge[-,line width=1.5pt, color=c5] (1.0000000000000002,-1.0);
		\draw (1.0000000000000002,-1.0) edge[-,line width=1.5pt, color=c5] (1.4330127018922194,-0.75);
		\draw (1.4330127018922194,-0.75) edge[-,line width=1.5pt, color=c5] (1.4330127018922194,-0.25);
		\draw (1.4330127018922194,-0.25) edge[-,line width=1.5pt, color=c5] (1.0,0.0);
		\draw (1.0,0.0) edge[-,line width=1.5pt, color=c6] (1.4330127018922194,-0.25);
		\draw (1.4330127018922194,-0.25) edge[-,line width=1.5pt, color=c6] (1.8660254037844388,-0.5);
		\draw (1.8660254037844388,-0.5) edge[-,line width=1.5pt, color=c6] (1.8660254037844388,0.0);
		\draw (1.8660254037844388,0.0) edge[-,line width=1.5pt, color=c6] (1.4330127018922194,0.25);
		\draw (1.4330127018922194,0.25) edge[-,line width=1.5pt, color=c6] (1.0,0.0);
		\node[circle,fill=black,minimum size = 5pt, inner sep=0pt] () at (1.0,1.0) {};
		\node[circle,fill=black,minimum size = 5pt, inner sep=0pt] () at (1.0,0.5) {};
		\node[circle,fill=black,minimum size = 5pt, inner sep=0pt] () at (1.0,0.0) {};
		\node[circle,fill=black,minimum size = 5pt, inner sep=0pt] () at (1.0,-0.5) {};
		\node[circle,fill=black,minimum size = 5pt, inner sep=0pt] () at (1.0000000000000002,-1.0) {};
		\node[circle,fill=black,minimum size = 5pt, inner sep=0pt] () at (1.4330127018922192,0.75) {};
		\node[circle,fill=black,minimum size = 5pt, inner sep=0pt] () at (1.4330127018922194,0.25) {};
		\node[circle,fill=black,minimum size = 5pt, inner sep=0pt] () at (1.4330127018922194,-0.25) {};
		\node[circle,fill=black,minimum size = 5pt, inner sep=0pt] () at (1.4330127018922194,-0.75) {};
		\node[circle,fill=black,minimum size = 5pt, inner sep=0pt] () at (0.5669872981077807,0.75) {};
		\node[circle,fill=black,minimum size = 5pt, inner sep=0pt] () at (0.5669872981077808,0.25) {};
		\node[circle,fill=black,minimum size = 5pt, inner sep=0pt] () at (0.5669872981077808,-0.25) {};
		\node[circle,fill=black,minimum size = 5pt, inner sep=0pt] () at (0.5669872981077808,-0.75) {};
		\node[circle,fill=black,minimum size = 5pt, inner sep=0pt] () at (1.8660254037844386,0.5) {};
		\node[circle,fill=black,minimum size = 5pt, inner sep=0pt] () at (1.8660254037844388,0.0) {};
		\node[circle,fill=black,minimum size = 5pt, inner sep=0pt] () at (1.8660254037844388,-0.5) {};
		\node[circle,fill=black,minimum size = 5pt, inner sep=0pt] () at (0.1339745962155614,0.5) {};
		\node[circle,fill=black,minimum size = 5pt, inner sep=0pt] () at (0.13397459621556151,0.0) {};
		\node[circle,fill=black,minimum size = 5pt, inner sep=0pt] () at (0.13397459621556151,-0.5) {};
	\end{tikzpicture}}%
\caption{(a) Example of a team surviving orienteers problem with depot in the center. Thick edges correspond to survival probability 0.98, light edges have survival probability 0.91.  (b) Optimal paths for return constraint $p_s = 0.70$ and $K=6$. (c) Greedy paths for the same constraints. 
}
\end{figure}

\begin{figure}[h]
    \centering
    \includegraphics{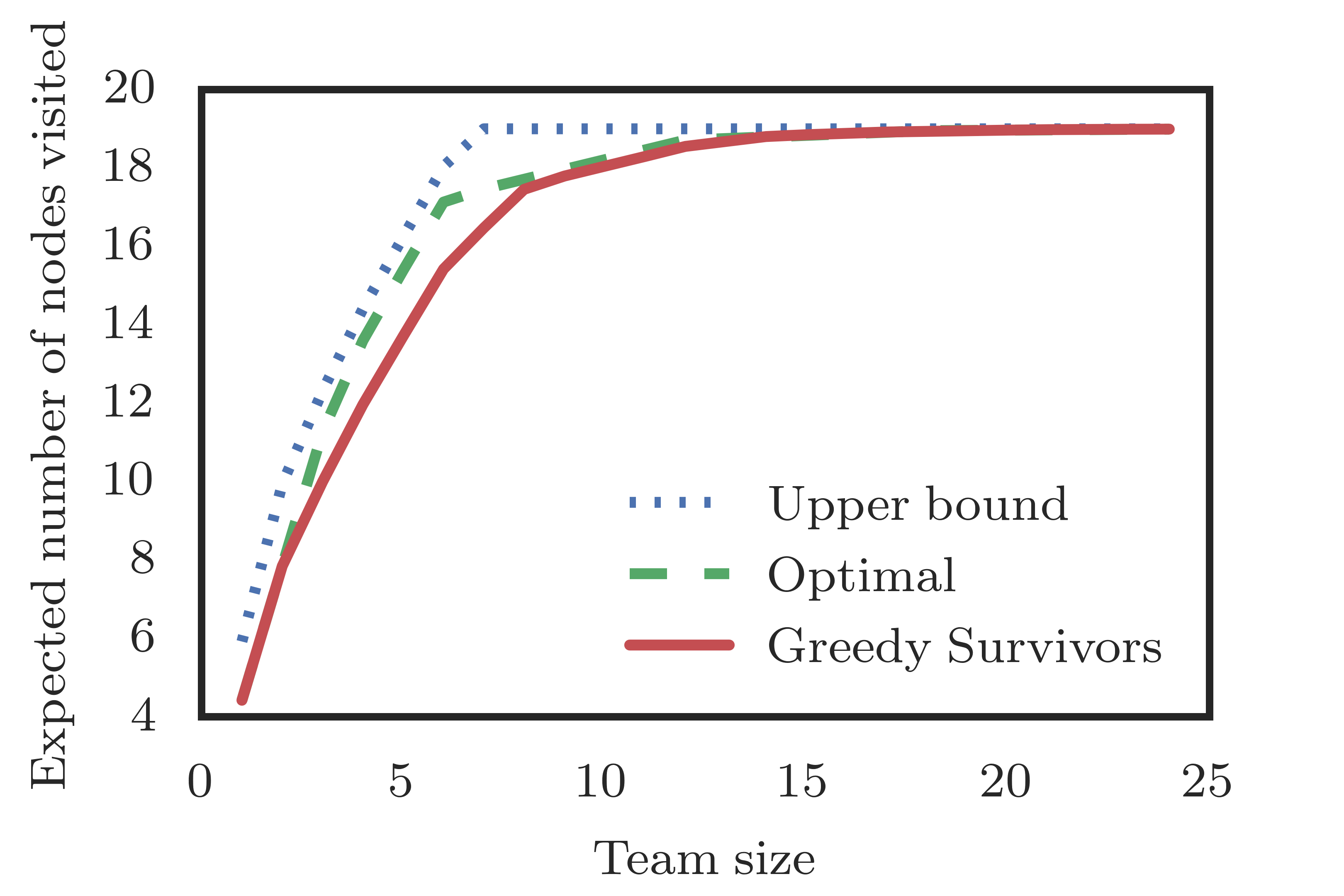}
    \caption{Performance comparison for the example in Figure \ref{fig:hexgraph}. The optimal value is shown in green and the GreedySurvivors value is shown in red. The upper bound on the optimum from Theorem \ref{thm:approx_guarantee} is shown by the dotted line.}
    \label{fig:performance}
\end{figure}

\subsection{Empirical Approximation Factor}
We compare our algorithm's performance against an upper bound on the optimal value. We use an exact solver for the orienteering problem (using the Gurobi MIP solver), and generate instances on a graph with $V=65$ nodes and uniformly distributed edge weights in the interval $[0.3,1)$. The upper bound used for comparison is the smallest of 1) the number of nodes which can be reached within the budget, 2) the constant-factor guarantee times our approximate solution, and 3) the guarantee from solving the problem with an oversized team (from Theorem \ref{thm:approx_guarantee}). The average performance (relative to the upper bound) along with the total range of results are shown in Figure \ref{fig:bounds}, with the function $\approxfact$ drawn as a dashed line. As shown, the approximation factor converges to the optimal as the team size grows. The dip around $p_s = 0.85$ is due to looseness in the bound and the fact that the optimum is not yet reached by the greedy routine.
\begin{figure}
    \centering
    \includegraphics{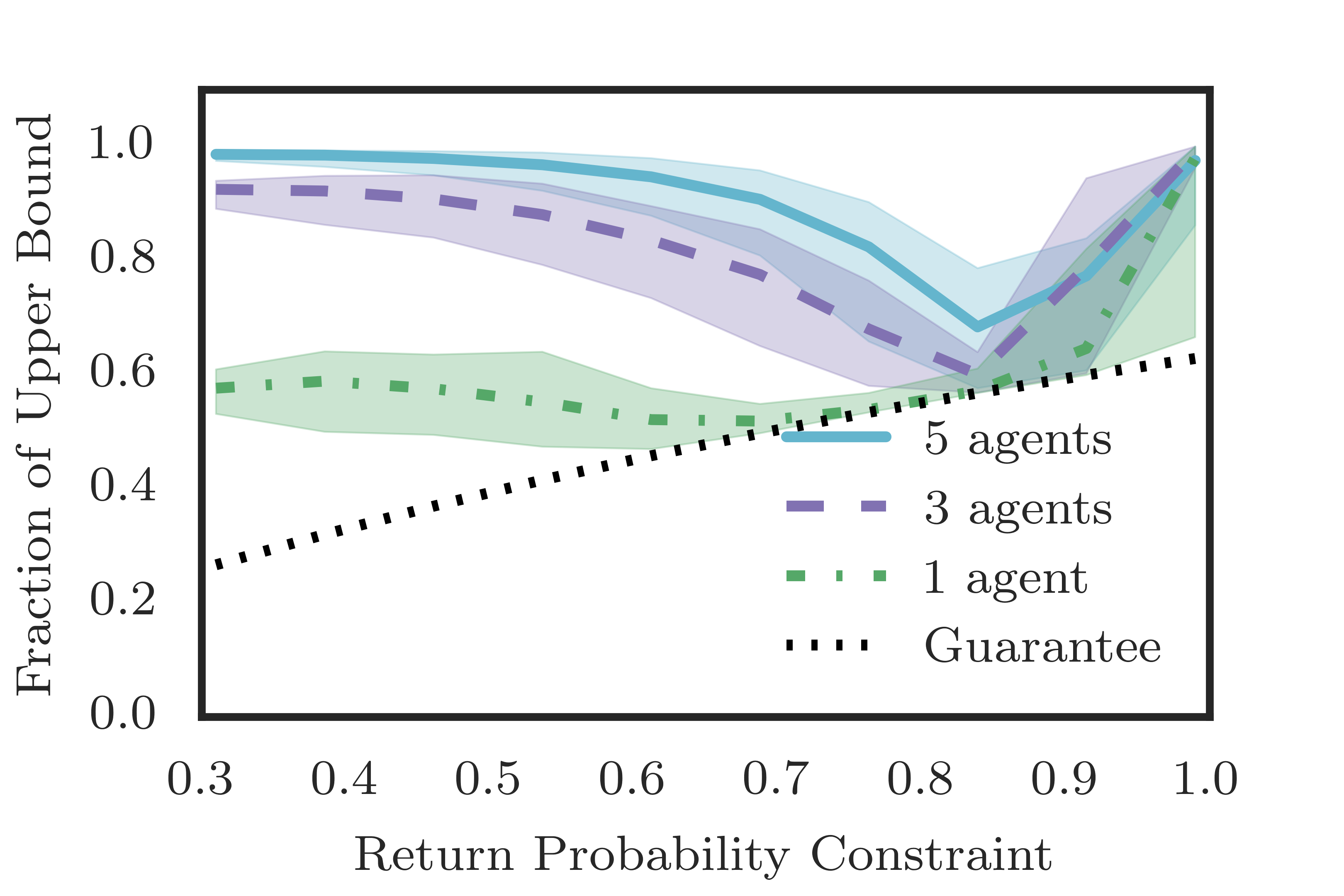}
    \caption{Ratio of actual result to upper bound for a 65 node complete graph. The team size ranges from 1 (at the bottom) to 5 (at the top), and in all cases a significant fraction of the possible reward is accumulated even for small $p_s$.}
    \label{fig:bounds}
\end{figure}

\subsection{Large Scale Performance}
We demonstrate the run-time of \texttt{GreedySurvivors} for large-scale problems by planning $K=25$ paths for complete graphs of various sizes.  We use two \texttt{Orienteering} routines: the mixed integer formulation from \cite{IK-PSB-TD:16} with Gurobi's MIP solver, and an adapted version of the open source heuristic developed by the authors of \cite{PV-WS-GVB-DVO:09}. We use a heuristic approach because in practice it performs better than a polynomial time approximation algorithm. For the cases where we have comparison data (up to $V = 100$ nodes) the heuristic achieves an average of 0.982 the reward of the MIP algorithm. Even very large problems, e.g. 25 robots on a 900 node graph, can be solved in approximately an hour with the heuristic on a machine that has a 3GHz i7 processor using 8 cores and 64GB of RAM.

\ifdual
\subsection{Dual Problem}
We demonstrate our approach for finding a feasible solution to the dual problem on a random graph with $82$ nodes, edge weights distributed uniformly between $0.7$ and $1.0$, and visit constraints $p_r \ge 0.9$. The computation time is on the order of minutes (the largest time to fill in any cell is 5 minutes).

An illustration of the feasible space found by our algorithm is given in Figure \ref{fig:dual}. Note that smaller survival probabilities sometimes require larger teams (for instance $p_s = 0.1$ versus $p_s = 0.5$) - this is due to the greediness in our approach, when the first robot overestimates its survival and takes excessive risk. For larger $p_s$, the first robot is less aggressive and so feasible paths are found for a smaller team.

\begin{figure}
    \centering
    \includegraphics{Figures/dual_data.png}
    \caption{Illustration of the search space for the dual problem. Feasible points are shown in dark green and infeasible points in light gray. On the far right side, no team size is feasible because there are unreachable nodes. The optimal point is marked by the black dot, along with a curve showing points of equal cost.}
    \label{fig:dual}
\end{figure}
\fi


\section{Conclusion}\label{sec:conclusion}
In this paper we formulate the \emph{Team Surviving Orienteers problem}, where we are asked to maximize the expected number of nodes visited while guaranteeing that every robot survives with probability at least $p_s$. What sets this problem apart from previous work is the notion of risky traversal, where a robot might not complete its planned path. This introduces a difficult combination of submodular objective and survival probability constraints. We develop the \texttt{GreedySurvivors} algorithm which has polynomial time complexity with a constant-factor guarantee that the returned objective is $\Omega((\approxfact)\text{OPT})$, where $\text{OPT}$ is the optimum. We demonstrate the effectiveness of our algorithm in numerical simulations and discuss extensions to several variants of the TSO problem.

There are numerous directions for future work: First, an on-line version of this algorithm would react to knowledge of robot failure and re-plan the paths without exposing the surviving robots to more risk. Second, considering non-homogeneous teams would expand the many practical applications of the TSO problem. Third, extending the analysis to {\em walks} on a graph (where a robot can re-visit nodes) would allow for a broader set of solutions and may yield better performance.  Finally, we are interested in using some of the concepts from \cite{AMC-MG-BWT:11} to consider more general probability models for the TSO.

\section*{Acknowledgements}
The authors would like to thank Federico Rossi and Edward Schmerling for their insights which led to tighter analysis. 
\bibliographystyle{IEEEtran}
{\renewcommand{\baselinestretch}{.9}
\bibliography{main}}

\end{document}
x